\definecolor{shadecolor}{gray}{0.9}
\DeclareRobustCommand{\parhead}[1]{\textbf{#1}~}
\newcounter{parcount}
\lstdefinestyle{mystyle}{
    commentstyle=\color{OliveGreen},
    keywordstyle=\color{BurntOrange},
    numberstyle=\tiny\color{black!60},
    stringstyle=\color{MidnightBlue},
    basicstyle=\ttfamily,
    breakatwhitespace=false,
    breaklines=true,
    captionpos=b,
    keepspaces=true,
    numbers=left,
    numbersep=5pt,
    showspaces=false,
    showstringspaces=false,
    showtabs=false,
    tabsize=2
}
      \OR\ifentrytype{incollection}\OR\ifentrytype{inproceedings}%
      \OR\ifentrytype{inreference}}
\definecolor{shadecolor}{gray}{0.9}
\DeclareRobustCommand{\parhead}[1]{\textbf{#1}~}
\theoremstyle:=definition,remark,plain\do{%
        \expandafter\g@addto@macro\csname th@\theoremstyle\endcsname{%
            \addtolength\thm@preskip\parskip
            }%
        }
\newcommand\independent{\protect\mathpalette{\protect\independenT}{\perp}}
\def\independenT#1#2{\mathrel{\rlap{$#1#2$}\mkern2mu{#1#2}}}
\newcommand{\E}[2]{\mathbb{E}_{#1}\left[#2\right]}
\def\adl@drawiv#1#2#3{%
        \hskip.5\tabcolsep
        \xleaders#3{#2.5\@tempdimb #1{1}#2.5\@tempdimb}%
                #2\z@ plus1fil minus1fil\relax
        \hskip.5\tabcolsep}
\newcommand{\cdashlinelr}[1]{%
  \noalign{\vskip\aboverulesep
           \global\let\@dashdrawstore\adl@draw
           \global\let\adl@draw\adl@drawiv}
  \cdashline{#1}
  \noalign{\global\let\adl@draw\@dashdrawstore
           \vskip\belowrulesep}}
\renewcommand{\epsilon}{\varepsilon}
\declaretheorem[style=plain,name=Theorem]{theorem}
\declaretheorem[style=definition,sibling=theorem,name=Definition]{defn}
\declaretheorem[style=remark,sibling=theorem,name=Remark]{remark}
\newenvironment{example*}
 {\pushQED{\qed}\example}
 {\popQED\endexample}
\numberwithin{equation}{section}
\newcommand{\defnphrase}[1]{\emph{#1}}
\DeclareMathOperator*{\argmin}{argmin}
\renewcommand{\Pr}{\mathrm{P}}
\newcommand{\given}{\mid}
\newcommand{\dist}{\ \sim\ }
\providecommand\given{} %
\newcommand\SetSymbol[1][]{
  \nonscript\,#1:\nonscript\,\mathopen{}\allowbreak}
\DeclarePairedDelimiterX\Set[1]{\lbrace}{\rbrace}%
{ \renewcommand\given{\SetSymbol[]} #1 }
\crefname{defn}{Definition}{Definitions}
\declaretheoremstyle[
spacebelow=\parsep,
    spaceabove=\parsep,
  mdframed={
    backgroundcolor=gray!10!white,     %
    hidealllines=true, 
    innertopmargin=8pt, 
    innerbottommargin=4pt, 
    skipabove=8pt,
    skipbelow=10pt,
    nobreak=true
}
]{grayboxed}
\newcommand{\Etrain}{\mathcal{E}_{\text{train}}}
\renewcommand{\E}{\mathcal{E}}
\newcommand{\pa}{\mathrm{pa}}
\begin{document}

\title{The Causal Structure of Domain Invariant Supervised Representation Learning}
\date{}
\author[1]{Zihao Wang}
\author[1,2]{Victor Veitch}
\affil[1]{Department of Statistics, University of Chicago}
\affil[2]{Google Research}

\maketitle

\begin{abstract}
  Machine learning methods can be unreliable when deployed in domains that differ from the domains on which they were trained. 
  There are a wide range of proposals for mitigating this problem by learning representations that are ``invariant'' in some sense.
  However, these methods generally contradict each other, and none of them consistently improve performance on real-world domain shift benchmarks.
  There are two main questions that must be addressed to understand when, if ever, we should use each method.
  First, how does each ad hoc notion of ``invariance'' relate to the structure of real-world problems? And, second, when does learning invariant representations actually yield robust models?
  To address these issues, we introduce a broad formal notion of what it means for a real-world domain shift to admit invariant structure.
  Then, we characterize the causal structures that are compatible with this notion of invariance.
  With this in hand, we find conditions under which method-specific invariance notions correspond to real-world invariant structure, and we clarify the relationship between invariant structure and robustness to domain shifts. 
  For both questions, we find that the true underlying causal structure of the data plays a critical role.
\end{abstract}

\section{Introduction}

Machine learning methods are unreliable in the presence of \emph{domain shift}, where there is a mismatch between the environment(s) where training data is collected and the environment where the trained model is actually deployed \citep{shimodaira2000improving,quinonero2008dataset}.
A variety of techniques have been proposed to mitigate domain shift problems.
One popular approach is to try to learn a representation function $\phi$ of the data such that $\phi(X)$ is in some sense ``invariant'' across domains.
The motivating intuition is that $\phi(X)$ should preserve the structure of the data that is common across domains, while throwing away the part that varies across domains.
It then seems intuitive that a predictor trained on top of such a representation would have robust performance in new domains.
In this paper, we'll study two closely related questions: 
When do different ``invariant'' learning methods actually succeed at learning the part of the data that is invariant across domains? And, When does learning a domain invariant representation actually help with robustness of out-of-domain predictions?

There are many methods aimed at domain-invariant representation learning.
We'll focus on supervised methods, that learn representations useful for predicting some label $Y$.
For example, consider the following broad families of supervised domain-invariant representation learning methods:
\begin{description}\label{sec:methods-category}
\item[Data Augmentation] We perturb each input in some way and learn a representation that is the same for all perturbed versions.
  E.g., if $t(X)$ is a small rotation of an image $X$, then we require $\phi(X)=\phi(t(X))$ \citep{krizhevsky2012imagenet, hendrycks2019augmix, cubuk2019autoaugment, xie2020unsupervised, wei2019eda, paschali2019data, hariharan2017low, sennrich2015improving, kobayashi2018contextual, nie2020named}.

\item[Marginal Distribution Invariance] We require $\phi(X)$ to have the same distribution in all environments, $\Pr^e(\phi(X))=\Pr^{e'}(\phi(X))$, where $\Pr^e$ is the distribution in environment $e$ \citep{muandet2013domain, ganin2016domain, albuquerque2020adversarial, li2018domain, sun2017correlation, sun2016deep, matsuura2020domain}

\item[Conditional Distribution Invariance] We require the label-conditional distribution to be the same in each environment, $\Pr^e(\phi(X) \given Y)=\Pr^{e'}(\phi(X) \given Y)$ \citep[][]{li2018deep, long2018conditional, tachet2020domain, goel2020model}

\item[Sufficiency Invariance] We require that the representation is sufficient for predicting $Y$ in each environment, in the sense that $\Pr^e(Y \given \phi(X)) \! =\! \Pr^{e'}(Y \given \phi(X))$  \citep[][]{peters2016causal, rojas2018invariant, wald2021calibration}. 

\item[Risk Minimizer Invariance] We learn a representation $\phi(X)$ so that there is a fixed (domain-independent) predictor $w^*$ on top of $\phi(X)$ that minimizes risk in all domains \citep[][]{arjovsky2019invariant, lu2021nonlinear, ahuja2020invariant, krueger2021out, bae2021meta}.  
\end{description}
In each case, the intuitive aim is to learn a representation that throws away information that varies `spuriously' across domains while preserving information that is invariant across domains.
However, the notion of invariance is substantively different in each approach---indeed, they often directly contradict one another! 
Further, it is not obvious a priori which notion of invariance is the ``right'' one to use in any given problem.
The empirical situation is no better.
Although each of these methods improves robustness in some situations, none of them dominate on real-world domain shift benchmarks \cite{gulrajani2020search,koh2021wilds}. 
In fact, none of them even consistently beat naive empirical risk minimization!
For any particular problem, it is unclear which, if any, of these methods is appropriate.

The principle challenge is that it is unclear what ``part of the data that is invariant across domains'' should mean formally. So, it's not clear which of the method-specific notions of invariance to rely on.

In this paper, we'll come at the problem from the opposite direction. We begin by looking for an abstract notion of domain shift that aims to capture the broad intuition for why invariant structure should exist. That is, a notion of domain shift where the intuitive meaning of ``invariant'' has a precise formalization. Then, we'll characterize how each ad-hoc ``invariant'' learning method behaves under these kinds of domain shift.

More precisely, the development of this paper is as follows.
\begin{enumerate}
  \item We introduce \emph{Causally Invariant with Spurious Associations} (CISA) as an abstract notion of domain shift that admits a cannonical formalization of domain-invariant part of the data. Then, we characterize the real-world causal structures compatible with CISA. 
  \item With this in hand, we establish conditions under which each domain-invariant learning method succeeds (or fails) at learning the cannonical invariant structure. These conditions closely relate to the underlying causal structure of the problem.
  \item Additionally, we study the relationship between learning the invariant structure and achieving out-of-domain robustness. It turns out the relationship between these two things depends on the true underlying causal structure of the data. Depending on the structure, invariance may lead to robustness, or offer no guarantees at all.
\end{enumerate}

\section{Causally Invariant Domain Shifts}
The first problem is to specify what invariance means.

To anchor our discussion, let's consider an (idealized) real-world example.
\Citet{Beede:Retinopathy:2020} describe deploying a deep learning model to screen for diabetic retinopathy, a diabetes-related condition where blood vessels in the retina become damaged. The model's input is a photo of the patient's retina (taken on-site by a nurse), and its output is a predicted probability that the patient has the condition. The model was deployed across 11 clinics in Thailand. The authors find significant performance differences across clinics.\footnote{Particularly with respect the model rejecting cases it is uncertain about.}
They identify several possible causes of this variation. These include variable levels of lighting in the room where the photo is taken, degradation of the cameras, and inconsistent use of dilating eye drops.
Each of these factors affects the retinal photo that is fed to the network. 

In this example, it's intuitively clear what invariance should mean. We want to be able to process the photo in a manner that's insensitive to factors of variation such as lighting, camera degradation, and eye drop use. Our goal is an abstract, general, notion of invariance that captures this intuition.

\paragraph{Invariant part of $X$}

The first step is to introduce a formalization of the factors of variation that we want to be invariant to.
We model these as latent variables $Z$ that cause the observed features $X$---e.g., the lighting level in the room is a cause of the (brightness of the) photo.
We model these causes as \emph{latent} variables because we do not know what they are a priori. Our ultimate goal is to learn to be invariant to these factors using only environment information, without needing to explicitly identify them. 

Now, for invariance to be a sensible goal, $Z$ must not also be a cause of $Y$. 
However, for there to even be a problem with vanilla empirical risk minimization, there must be some statistical association between $Y$ and $Z$. If such an association did not exist, then we wouldn't need a method to explicitly enforce invariance. For example, it may be the case that it's more likely for the room light to be left on when the patient's eyesight is worse---this would create an association between $Y$ and $Z$.
\begin{defn}
  We say a latent cause $Z$ of $X$ is a \emph{spurious factor of variation} if it is not a cause of $Y$ and $Y$ and $Z$ are not independent. Call the set of all such causes the \emph{spurious factors of variation}.
\end{defn}
Since they're unobserved, we'll collapse the spurious factors of variation into a single variable $Z$ going forward. 

The next step is to specify what is meant by part of the features invariant to these spurious factors of variation.
These are aspects of the image that are not affected by lighting level, camera degradation, etc. 
Generally, these will be high-level, abstract features $g(X)$---e.g., whether there's a damaged blood vessel in the image.
We'll formalize this using definitions from \cite{veitch2021counterfactual}.
\begin{defn}
Let $g$ be a function of $X$. We say $g$ is \defnphrase{counterfactually invariant to spurious factors} (abbreviated CF-invariant), if $g(X(z)) = g(X(z')) \ a.s.,  \ \forall z, z' \in \mathcal{Z}$. Here, $X(z)$ is potential outcomes notation, denoting the $X$ we would have seen had $Z$ been set to $z$.
\end{defn}
A high-level feature $g(X)$ is not affected by $Z$ if it is CF-invariant to $Z$.
We then formalize the part of $X$ not affected by $Z$ as the collection of all such features:
\begin{defn}
The \defnphrase{invariant part of $X$}, $X_z^{\perp}$, is a $X$-measurable variable such that $g$ is CF-invariant iff $g(X)$ is $X_z^{\perp}$-measurable, for all measurable functions $g$.\footnote{Such a variable exists under weak conditions; e.g., $Z$ discrete \citep{veitch2021counterfactual}.}
\end{defn}

We can now view $X$ as divided into two parts: $X_z^\perp$ the part of $X$ not affected by the spurious factors $Z$, and $X_z$ the remaining part affected by $Z$. We do not assume the division of $X$ is known a priori (indeed $Z$ is not even known). Also note that $X_z^\perp$ may affect $X_z$, so these parts need not be independent. 

\paragraph{Domain Shift}
Now, we want a notion of domain shift that is compatible with this notion of invariance.
Morally, we want all causal relationships to be preserved across domains, while allowing the non-causal association between $Y$ and $Z$ to fluctuate.
E.g., the propensity for patients with retinopathy to have damaged blood vessels is the same across sites, but the propensity for the room light to be left for hard-of-seeing patients may vary.
To capture this structure, we introduce an additional latent variable $U$ that is a common cause of $Z$ and $Y$. We will allow the distribution of $U$ to change across domains, thereby inducing a change in the non-causal association between $Z$ and $Y$.
\begin{defn}
  We say an unobserved variable $U$ is an \emph{unobserved confounder} if it is a common cause of $Z$ and $Y$, does not confound the relationship between $X_z^\perp$ and $Y$ and is not caused by any other variable. 
\end{defn}

With this in hand, we can specify our notion of domain shift. We abstract each domain $e$ as a probability distribution $P_e$ over the observable variables $X$ and $Y$. Then,
\begin{defn}\label{def:cisa}
  We say a set of domains $\{P_e\}_{e \in \E}$ are \emph{Causally Invariant with Spurious Associations} (CISA), if there are unobserved spurious factors of variation $Z$ and unobserved confounder $U$
  so that 
  \begin{equation*}
    P_e(X, Y) = \int P_0(X, Y, Z | U) P_e(U) dZ dU, \forall e \in \E,
  \end{equation*}
  where $P_0$ is some fixed distribution and $P_e(U)$ is a domain-specific distribution of the unobserved confounder.   
\end{defn}

Summarizing, CISA gives an abstract notion of domain shift where ``invarant part of $X$'' is clearly defined.
CISA captures the following chain of reasoning. For ``invariant part'' of $X$ to make sense, there needs to be some latent factor of variation $Z$ to be invariant to. For invariance to be a sensible goal, $Z$ shouldn't causally affect $Y$. But, $Z$ and $Y$ need to be statistically associated, otherwise even a naive predictor won't use. Then, it's natural to assume that $Z$ and $Y$ are associated due to some unknown common cause $U$. Finally, to get a domain shift, we let the strength of association between $Y$ and $Z$ vary (by varying the distribution of $U$) across environments.

\subsection{CISA compatible causal structures}
\begin{figure}
  \vspace{-5pt}
  \centering
    \captionsetup{width=\columnwidth}
    \begin{subfigure}[b]{.3\textwidth}
      \centering
      \captionsetup{width=\linewidth}
      \scalebox{0.65}{
        \begin{tikzpicture}[
    var/.style={draw,circle,inner sep=0pt,minimum size=0.8cm},
    latentconf/.style={draw,circle,dashed,inner sep=0pt,minimum size=0.8cm, fill=green!5}
    ]
        \node (X) [var] {$X_z$};
        \node (Y) [var, right=0.7cm of X] {$Y$};
        \node (Z) [latentconf, left=0.7cm of X] {$Z$};
        \node (Xzperp) [var, above=0.7cm of X] {$X_z^{\perp}$};
        \node (U) [latentconf, below=0.7cm of X] {$U$};
        \node (E) [var, below=0.7cm of Y] {$E$};
        \path[->, line width=0.5mm] (Z) edge (X); 
        \path[->, line width=0.5mm] (U) edge (Z); 
        \path[->, line width=0.5mm] (U) edge (Y); 
        \path[->, line width=0.5mm] (Y) edge (X); 
        \path[->, line width=0.5mm] (Y) edge (Xzperp); 
        \path[->, line width=0.5mm] (Xzperp) edge (X); 
        \path[->, line width=0.5mm] (E) edge (U); 

        \tikzstyle{bigbox}=[draw, dotted, very thick, inner sep=8pt]
        \node[bigbox, fit=(X)(Xzperp)] (Xfull) {};
        \node[below right] at (Xfull.north west) {$X$};

    \end{tikzpicture}}
      \caption{anti-causal}
      \label{fig:anticausal}
  \end{subfigure}
  \begin{subfigure}[b]{.3\textwidth}
      \centering
      \captionsetup{width=\linewidth}
      \scalebox{0.65}{
      \begin{tikzpicture}[
    var/.style={draw,circle,inner sep=0pt,minimum size=0.8cm},
    latentconf/.style={draw,circle,dashed,inner sep=0pt,minimum size=0.8cm, fill=green!5}
    ]
        \node (X) [var] {$X_z$};
        \node (Y) [var, right=0.7cm of X] {$Y$};
        \node (Z) [latentconf, left=0.7cm of X] {$Z$};
        \node (Xzperp) [var, above=0.7cm of X] {$X_z^{\perp}$};
        \node (U) [latentconf, below=0.7cm of X] {$U$};
        \node (E) [var, below=0.7cm of Y] {$E$};
        \path[->, line width=0.5mm] (Z) edge (X); 
        \path[->, line width=0.5mm] (U) edge (Z); 
        \path[->, line width=0.5mm] (U) edge (Y); 
        \path[->, line width=0.5mm] (Y) edge (X); 
        \path[->, line width=0.5mm] (Xzperp) edge (Y); 
        \path[->, line width=0.5mm] (Xzperp) edge (X); 
        \path[->, line width=0.5mm] (E) edge (U); 

        \tikzstyle{bigbox}=[draw, dotted, very thick, inner sep=8pt]
        \node[bigbox, fit=(X)(Xzperp)] (Xfull) {};
        \node[below right] at (Xfull.north west) {$X$};
\end{tikzpicture} }
      \caption{conf-out}
      \label{fig:confout}
  \end{subfigure}
  \begin{subfigure}[b]{.3\textwidth}
      \centering
      \captionsetup{width=\linewidth}
      \scalebox{0.65}{    
      \begin{tikzpicture}[
    var/.style={draw,circle,inner sep=0pt,minimum size=0.8cm},
    latentconf/.style={draw,circle,dashed,inner sep=0pt,minimum size=0.8cm, fill=green!5}
    ]
        \node (X) [var] {$X_z$};
        \node (Y) [var, right=0.7cm of X] {$Y$};
        \node (Z) [latentconf, left=0.7cm of X] {$Z$};
        \node (Xzperp) [var, above=0.7cm of X] {$X_z^{\perp}$};
        \node (U) [latentconf, below=0.7cm of X] {$U$};
        \node (E) [var, below=0.7cm of Y] {$E$};
        \path[->, line width=0.3mm] (Z) edge (X); 
        \path[->, line width=0.3mm] (Xzperp) edge (X);
        \path[->, line width=0.3mm] (E) edge (U); 
        \path[->, bend left=45, line width=0.3mm] (Y) edge (Z); 
        \path[->, line width=0.3mm] (Xzperp) edge (Y);
        \path[->, bend left=90, line width=0.3mm] (U) edge (Xzperp); 
        \path[->, line width=0.3mm] (U) edge (Z); 
    
        \tikzstyle{bigbox}=[draw, dotted , very thick, inner sep=8pt]
        \node[bigbox, fit=(X)(Xzperp)] (Xfull) {};
        \node[below right] at (Xfull.north west) {$X$};
    \end{tikzpicture} }
      \caption{conf-desc}
      \label{fig:confdesc}
  \end{subfigure}
  \caption{Examples of CISA compatible causal structures. The spurious factors of variation $Z$ are latent, and unknown. The confounding factors $U$ are also latent. The observed features decompose into two parts, one causally affected by $Z$, and the other not. In CISA, the environment-invariant part of $X$ is $X_z^\perp$.}
  \label{fig:example-dags}
       \vspace{-15pt}
\end{figure}
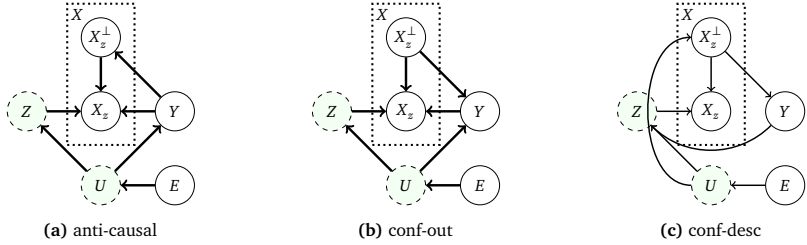
\Cref{def:cisa} is abstract, so it may not be intuitively clear what kinds of structures are compatible with CISA. 
A key requirement of CISA is that a common causal structure holds across domains.
\Cref{fig:example-dags} shows some examples of causal DAGS that are compatible with CISA. 
Note we have introduced a new variable $E$ to label the environment.
These all have the common structure that there is a part of $X$ that is invariant to the factors of variation $Z$ that have unstable cross-environment relationships with $Y$.
But they differ in other ways, and it's not clear that an invariant learning method that works for one should also work for others (indeed, this will be a major theme of the subsequent sections). 

We briefly describe these examples to give a sense of what kind of situations CISA can accomodate.

\paragraph{Anti-causal} The retinopathy example has anti-causal structure. Both the spurious factors $Z$ and true condition status $Y$ affect the image $X$. The strength of non-causal association between $Y$ and $Z$ varies across clinics. 

From \cref{fig:anticausal} we see that $\Pr(X_z^\perp \given Y)$ is invariant across domains. We can read this off because CISA preserves causal relationships between variables and their causal parents.
(It could also be derived directly from \cref{def:cisa} and the Markov property of the causal DAG)
If $Z$ was trivial, this would correspond to prior shift, where $\Pr(X \given Y)$ is assumed invariant \cite{lemberger2020primer}.

\paragraph{Confounded descendant}
Consider predicting disease status $Y$ from a large collection of medical tests $X$ (e.g., blood work, body mass, etc.). The tests serve as proxies for both causes and effects of the disease,
all of which may be confounded with patient demographics $Z$.
The nature of this confounding may change between hospitals, which we can view as multiple environments.

From \cref{fig:confdesc}, we see that $\Pr(Y \given X_z^\perp)$ is invariant across domains. In this case, $X_z^{\perp}$ is exactly the causal parents of $Y$.
So, the invariant predictor is one that uses only the causal parents of $Y$.
This ``learn the causal parents'' structure is the explicit goal of several causal invariant representation learning methods \citep{peters2016causal,arjovsky2019invariant}. In this sense, CISA allows for such structure.
If $Z$ was trivial, this would correspond to covariate shift, where $\Pr(Y \given X)$ is assumed invariant \cite{lemberger2020primer}.

\paragraph{Confounded outcome}
Consider trying to predict the quality of a product review. In training, we have review text $X$ and whether the review was judged to be ``helpful'' by at least one reader, $Y$. The text is affected by attributes $Z$ of the writer that may be causally unrelated to review quality---e.g., how upbeat they are.
Consider deploying across multiple product types; e.g., books and electronics. These are our environments. It may be the case that people who engage with book reviews are more upbeat then people who do the same for electronics. More upbeat people are more likely to judge a review to be helpful. Thus, there is confounding between the upbeatness of the review writer, $Z$, and the label $Y$. And, the strength of this association can vary across product types (environments).

From \cref{fig:confout}, we see that $\Pr(X_z^\perp)$ is invariant across domains. We can read this off because there is no directed path from $U$ to $X_z^\perp$ in the causal DAG. 
If $Z$ was trivial, this would correspond to the assumption that $\Pr(X)$ is invariant across environments.

\paragraph*{CISA compatible causal structures}
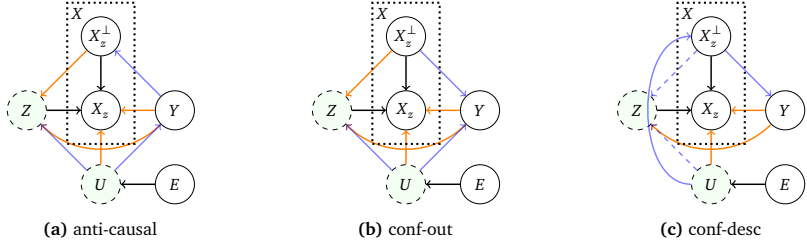
\begin{figure}
  \vspace{-5pt}
  \centering
    \captionsetup{width=\columnwidth}
    \begin{subfigure}[b]{.3\textwidth}
      \centering
      \captionsetup{width=\linewidth}
      \scalebox{0.65}{
        \begin{tikzpicture}[
var/.style={draw,circle,inner sep=0pt,minimum size=0.8cm},
latentconf/.style={draw,circle,dashed,inner sep=0pt,minimum size=0.8cm, fill=green!5}
]
    \node (X) [var] {$X_z$};
    \node (Y) [var, right=0.7cm of X] {$Y$};
    \node (Z) [latentconf, left=0.7cm of X] {$Z$};
    \node (Xzperp) [var, above=0.7cm of X] {$X_z^{\perp}$};
    \node (U) [latentconf, below=0.7cm of X] {$U$};
    \node (E) [var, below=0.7cm of Y] {$E$};
    \path[->, line width=0.3mm] (Z) edge (X); 
    \path[->, line width=0.3mm] (Xzperp) edge (X);
    \path[->, line width=0.3mm] (E) edge (U); 
    \path[->, orange, line width=0.3mm] (U) edge (X); 
    \path[->, orange, line width=0.3mm] (Y) edge (X); 
    \path[->, orange, bend left=45, line width=0.3mm] (Y) edge (Z); 
    \path[->, blue, line width=0.3mm, draw opacity=0.5] (Y) edge (Xzperp);
    \path[->, blue, line width=0.3mm, draw opacity=0.5] (U) edge (Y); 
    \path[->, blue, line width=0.3mm, draw opacity=0.5] (U) edge (Z); 
    \path[->, orange, line width=0.3mm] (Xzperp) edge (Z); 

    \tikzstyle{bigbox}=[draw, dotted, very thick, inner sep=8pt]
    \node[bigbox, fit=(X)(Xzperp)] (Xfull) {};
    \node[below right] at (Xfull.north west) {$X$};
\end{tikzpicture} }
      \caption{anti-causal}
      \label{fig:dags-types-a}
  \end{subfigure}
  \begin{subfigure}[b]{.3\textwidth}
      \centering
      \captionsetup{width=\linewidth}
      \scalebox{0.65}{
      \begin{tikzpicture}[
var/.style={draw,circle,inner sep=0pt,minimum size=0.8cm},
latentconf/.style={draw,circle,dashed,inner sep=0pt,minimum size=0.8cm, fill=green!5}
]
    \node (X) [var] {$X_z$};
    \node (Y) [var, right=0.7cm of X] {$Y$};
    \node (Z) [latentconf, left=0.7cm of X] {$Z$};
    \node (Xzperp) [var, above=0.7cm of X] {$X_z^{\perp}$};
    \node (U) [latentconf, below=0.7cm of X] {$U$};
    \node (E) [var, below=0.7cm of Y] {$E$};
    \path[->, line width=0.3mm] (Z) edge (X); 
    \path[->, line width=0.3mm] (Xzperp) edge (X);
    \path[->, line width=0.3mm] (E) edge (U); 
    \path[->, orange, line width=0.3mm] (U) edge (X); 
    \path[->, orange, line width=0.3mm] (Y) edge (X); 
    \path[->, orange, bend left=45, line width=0.3mm] (Y) edge (Z); 
    \path[->, blue, line width=0.3mm, draw opacity=0.5] (Xzperp) edge (Y);
    \path[->, blue, line width=0.3mm, draw opacity=0.5] (U) edge (Y); 
    \path[->, blue, line width=0.3mm, draw opacity=0.5] (U) edge (Z); 
    \path[->, orange, line width=0.3mm] (Xzperp) edge (Z); 

    \tikzstyle{bigbox}=[draw, dotted, very thick, inner sep=8pt]
    \node[bigbox, fit=(X)(Xzperp)] (Xfull) {};
    \node[below right] at (Xfull.north west) {$X$};
\end{tikzpicture} }
      \caption{conf-out}
      \label{fig:dags-types-b}
  \end{subfigure}
  \begin{subfigure}[b]{.3\textwidth}
      \centering
      \captionsetup{width=\linewidth}
      \scalebox{0.65}{    
      \begin{tikzpicture}[
var/.style={draw,circle,inner sep=0pt,minimum size=0.8cm},
latentconf/.style={draw,circle,dashed,inner sep=0pt,minimum size=0.8cm, fill=green!5}
]
    \node (X) [var] {$X_z$};
    \node (Y) [var, right=0.7cm of X] {$Y$};
    \node (Z) [latentconf, left=0.7cm of X] {$Z$};
    \node (Xzperp) [var, above=0.7cm of X] {$X_z^{\perp}$};
    \node (U) [latentconf, below=0.7cm of X] {$U$};
    \node (E) [var, below=0.7cm of Y] {$E$};
    \path[->, line width=0.3mm] (Z) edge (X); 
    \path[->, line width=0.3mm] (Xzperp) edge (X);
    \path[->, line width=0.3mm] (E) edge (U); 
    \path[->, orange, line width=0.3mm] (U) edge (X); 
    \path[->, orange, line width=0.3mm] (Y) edge (X); 
    \path[->, orange, bend left=45, line width=0.3mm] (Y) edge (Z); 
    \path[->, blue, line width=0.3mm, draw opacity=0.5] (Xzperp) edge (Y);
    \path[->, blue, bend left=90, line width=0.3mm, draw opacity=0.5] (U) edge (Xzperp); 
    \path[->, blue, dashed, line width=0.3mm, draw opacity=0.5] (U) edge (Z); 
    \path[->, blue, dashed, line width=0.3mm, draw opacity=0.5] (Xzperp) edge (Z); 

    \tikzstyle{bigbox}=[draw, dotted, very thick, inner sep=8pt]
    \node[bigbox, fit=(X)(Xzperp)] (Xfull) {};
    \node[below right] at (Xfull.north west) {$X$};
\end{tikzpicture} }
      \caption{conf-desc}
      \label{fig:dags-types-c}
  \end{subfigure}
  \caption{Every CISA compatible set of domains obeys one of the causal DAGs defined as follows: The DAG must match one of the three templates, where the black edges must be included, and the non-edges must be excluded. Orange edges may be included or excluded. In the case of \Cref{fig:dags-types-c}, at least one of the two dashed black arrows must be included. (Note that in \Cref{fig:dags-types-a} and \Cref{fig:dags-types-b}, the edges between $X_z^\perp$ and $Y$ are typically included, as otherwise there is no part of $X$ that has a non-trivial and stable relalationship with $Y$.)}
  \label{fig:dags-types}
       \vspace{-15pt}
\end{figure}

In fact, the three examples we just discussed are essentially the only causal structures compatible with CISA.
\begin{restatable}{theorem}{classifications} 
  A set of domains $\{P_e\}_{e \in \E}$ satisfies CISA if and only if the common underlying causal structure has a causal DAG that belongs to the set given in \Cref{fig:dags-types}. In particular, there are three families of allowed DAGs: anti-causal, confounded-outcome, or confounded-descendant.
\end{restatable}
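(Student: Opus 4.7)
My plan is to prove the two directions of the if-and-only-if separately, leveraging the observation that the CISA factorization in \Cref{def:cisa} is equivalent to a structural causal model in which the environment $E$ influences the observables only through the confounder $U$.

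For the backward direction, I would take an arbitrary DAG matching one of the three templates in \Cref{fig:dags-types} and apply the Markov factorization. Since $E$'s only outgoing edge points to $U$, the joint distribution factorizes as $P_e(X, Y, Z, U) = P_e(U) \prod_{V} P(V \mid \pa(V))$, where the product is over all variables other than $U$ and $E$ and each factor $P(V \mid \pa(V))$ is domain-independent. Marginalizing over $Z$ and $U$ yields the CISA factorization. I would then verify in each template that the labelled $Z$ is a spurious factor of variation (a cause of $X_z$, never a cause of $Y$, yet associated with $Y$ through $U$), that $U$ is an unobserved confounder (a common cause of $Z$ and $Y$, not confounding $X_z^\perp$ and $Y$, and a root apart from $E$), and that $X_z^\perp$ has no directed path from $Z$ and therefore coincides with the invariant part of $X$.

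For the forward direction, I would derive the structural constraints from CISA. The factorization identity forces $E$ to have $U$ as its unique child in the underlying DAG, for otherwise some conditional other than $P_e(U)$ would vary across environments. The definitional requirements then give: $Z$ has an edge into $X_z$ but is neither an ancestor of $Y$ nor of $X_z^\perp$; $U$ has only $E$ as a parent and is an ancestor of both $Z$ and $Y$; and $U$ does not confound $X_z^\perp$ and $Y$. I would then case-analyze the causal relationship between $Y$ and $X_z^\perp$: if $Y$ is an ancestor of $X_z^\perp$ the anti-causal template emerges; if $X_z^\perp$ is an ancestor of $Y$ and $U$ is not an ancestor of $X_z^\perp$ we obtain the confounded-outcome template; and if $X_z^\perp$ is an ancestor of $Y$ with $U$ also an ancestor of $X_z^\perp$ we obtain the confounded-descendant template (in which the non-confounding requirement forces $U$'s effect on $Y$ to be entirely mediated by $X_z^\perp$, explaining the dashed-edge disjunction in the template). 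The residual case where $Y$ and $X_z^\perp$ are causally unrelated degenerates into confounded-outcome with the optional $X_z^\perp \to Y$ edge absent.

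The main obstacle will be showing the case analysis is exhaustive and that no edges outside the allowed orange/black sets can appear. In particular, I must rule out edges such as $Y \to U$, $Z \to X_z^\perp$, $X_z \to Y$, or $X_z \to X_z^\perp$, using the combination of exogeneity of $U$ beyond $E$, the non-descendant requirement on $X_z^\perp$ with respect to $Z$, and the non-cause requirement on $Z$ with respect to $Y$. Carefully enumerating and excluding these edge possibilities, rather than any single case's verification, is the delicate combinatorial step.
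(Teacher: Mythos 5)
Your proposal takes essentially the same route as the paper's proof: derive the edge constraints forced by the definitions of $Z$, $U$, and $X_z^{\perp}$ (with $U$ being the unique child of $E$), then branch on the direction of the edge between $Y$ and $X_z^{\perp}$ and, in the $X_z^{\perp} \rightarrow Y$ case, on whether $U \rightarrow Y$ or $U \rightarrow X_z^{\perp}$, which yields exactly the anti-causal, confounded-outcome, and confounded-descendant templates; your explicit Markov-factorization argument for the converse direction is a detail the paper leaves implicit. One small correction: the dashed-edge disjunction in the confounded-descendant template (at least one of $U \rightarrow Z$ or $X_z^{\perp} \rightarrow Z$) is forced by the requirement that $U$ be a cause of $Z$, not by the mediation of $U$'s effect on $Y$ through $X_z^{\perp}$ --- the mediation requirement only rules out a direct $U \rightarrow Y$ edge in that case.
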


\section{Domain Invariant Representation Learning}

We now return to the question of what the right notion of domain-invariant representation is. In the case of CISA domains,
there is a canonical notion for the part of $X$ that has a domain-invariant relationship with $Y$.
Namely, $X_z^\perp$, the part of $X$ that is not affected by the spurious factors of variation. 
Accordingly, the goal of domain-invariant representation learning in this context is to find a representation $\phi$ such that
$\phi(X)$ only depends on $X_z^\perp$. Formally, this means finding a representation $\phi$ such that $\phi(X)$ is counterfactually invariant. 

Of course, we could satisfy this condition by simply throwing away all of the information in $X$ (e.g., $\phi(X)$ is a constant everywhere).
So, we further look for the counterfactually-invariant representation that preserves the most predictive power for the label $Y$.
Let $\Phi_{\text{cf-inv}}(\E)$ denote the set of CF-invariant representations for CISA domains $\E$.
Then, we define the optimal CISA invariant representation as the solution to:
\begin{align}\label{eq:optimal-cisa-rep}
  \min_{\phi: \mathcal{X} \rightarrow \mathcal{H}, w: \mathcal{H} \rightarrow \mathcal{Y}} & E_{P_{\Etrain}}[L(Y, (w \circ \phi)(X))]\\
  \text{subject to \ } & \phi \in \Phi_{\text{cf-inv}}(\E)
\end{align}
Here, the predictor $w$ and loss function $L$ capture the sense in which $\phi(X)$ should be predictive of $Y$.

This equation defines the idealized goal of domain-invariant representation learning in CISA domains.
The challenge is that the spurious factors of variation are unknown and unobserved, so we cannot identify the set of 
counterfactually-invariant representations directly. 
However, having established the ideal goal, we can now assess existing methods in terms of how well they approximate this idea.
That is, we now ask: under what circumstances do existing invariant supervised representation methods approximate \cref{eq:optimal-cisa-rep}?

\subsection{Data Augmentation}
Data augmentation is a standard technique in machine learning pipelines, and has been shown to (sometimes) help when faced with domain shifts \citep{wiles2021fine}.
Our goal now is to understand when and why data augmentation might enable CISA domain-invariant representation learning.

The basic technique first applies pre-determined ``label-preserving'' transformations $t$ to original features $X$ to generate artificial data $t(X)$.
There are two ways this transformed data can be used. The first option is to simply add the transformed data as extra data to a standard learning procedure.
Alternatively, we might pass in pairs $(X_i, t(X_i))$ to our learning procedure, and directly enforce some condition that $\phi(X) \approx \phi(t(X))$ \citep{garg2019counterfactual,von2021self}.

We first formalize a notion of ``label-preserving'' for CISA domains.
The key idea is that we can think of transformation $t(X)$ of $X$ as being equivalent to changing some cause of $X$ and then propagating this change through.
For example, suppose a particular transformation $t$ rotates the input images by 30 degrees, and $Z$ is the factor of variation corresponding to the angle away from vertical.
Then, we can understand the action of $t$ as $t(X(z)) = X(z+30)$, where we again use the potential outcomes notation for counterfactuals.
With this idea in hand, we see that a transformation is \emph{label-preserving} in CISA domains if it is equivalent to a change that affects only spurious factors of variation.
That is, label-preserving transformations cannot affect $X_z^{\perp}$.
Otherwise, the transformation may change the invariant relationship with $Y$; changing the lighting level of a retinopathy image is ok, but changing number of broken blood vessels is not.    
\begin{defn}
  We say a data transformation $t: \mathcal{X} \rightarrow \mathcal{X}$ is \emph{label-preserving} for CISA domains $\E$ if, for each $X(z)$ there is $z'$ so that $t(X(z)) = X(z'), a.e.$.
\end{defn}

Label preserving transformations leave the CISA invariant relationships (between $X_z^\perp$ and $Y$) alone, but can change the relationship between $Y$ and the spurious factors of variation $Z$.
Intuitively, if we have a `large enough' set of such transformations, they can destroy the relationship between $Y$ and $Z$ that exists in the training data.
So, we might expect that training with such data augmentation will automatically yield an optimal counterfactually-invariant predictor.

This is nearly correct, with the caveat that things can go wrong if there is a part of $X$ causally related to both $Z$ and $Y$.
That is, if there is a part of $X$ that relies on the interaction between $Z$ and $Y$.
We follow \citet{veitch2021counterfactual} in formalizing how to rule out this case: 
\begin{defn}
The spurious factors of variation $Z$ are \emph{purely spurious} if $Y \independent X | X_z^{\perp}, Z$
\end{defn}
\vspace{-5pt}
We can now state the main result connecting data augmentation and domain-invariance:

\begin{restatable}{theorem}{dataaug}\label{thm:data-aug}
  For a CISA domain, if the set of transformations $\mathcal{T}$ satisfies label-preserving and enumerates all potential outcomes of $Z$, and either
  \begin{compactenum}
    \item the model is trained to minimize risk on augmented data, and $Z$ is purely spurious, or
    \item the model is trained to minimize risk on original data, with hard consistency regularization (i.e. enforcing $\phi(X) = \phi(t(X)), \forall t \in \mathcal{T}$),
  \end{compactenum}
  then we recover the CF-invariant predictor that minimizes risk on original data.  
\end{restatable}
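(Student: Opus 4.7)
The plan is to handle the two parts separately, both leveraging the fact that the hypotheses force the transformation orbit $\{t(X) : t \in \mathcal{T}\}$ to coincide almost surely with the potential-outcome orbit $\{X(z) : z \in \mathcal{Z}\}$. By the label-preserving assumption each $t$ sends $X(z)$ to $X(z')$ for some $z'$, and by the enumeration hypothesis every potential outcome is realized as some $t(X)$.

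For Part 2 (hard consistency), the constraint $\phi(X) = \phi(t(X))$ for every $t \in \mathcal{T}$ immediately yields $\phi(X(z)) = \phi(X(z'))$ for every pair $z, z' \in \mathcal{Z}$, which is exactly CF-invariance; conversely any CF-invariant $\phi$ satisfies the constraint. So the feasible set of the regularized problem is precisely $\Phi_{\text{cf-inv}}(\E)$, and minimizing the original-data risk over it is exactly the program in \cref{eq:optimal-cisa-rep}.

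For Part 1 (augmented risk), let $P^{\mathrm{aug}}$ denote the law of $(Y, t(X))$ where $t$ is drawn from $\mathcal{T}$ independently of $(X,Y)$ with support on every potential outcome. By label-preservation and enumeration, $t(X) \equaldist X(Z^{\star})$ with $Z^{\star}$ a random element of $\mathcal{Z}$ that is independent of $Y$ under $P^{\mathrm{aug}}$. The purely spurious assumption $Y \independent X \mid X_z^\perp, Z$ then gives
\begin{align*}
  P^{\mathrm{aug}}(Y \mid t(X)) \;=\; P^{\mathrm{aug}}(Y \mid X_z^\perp, Z^{\star}) \;=\; P(Y \mid X_z^\perp),
\end{align*}
so the Bayes-optimal predictor under $P^{\mathrm{aug}}$ depends on $X$ only through $X_z^\perp$ and is therefore CF-invariant. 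On the other hand, any CF-invariant $f$ satisfies $f(t(X)) = f(X)$ a.s., so augmented and original risks coincide on $\Phi_{\text{cf-inv}}(\E)$. Combining the two facts, the unconstrained augmented-risk minimizer must be CF-invariant and must minimize original risk among CF-invariant predictors, which is precisely the desired object.

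The main obstacle is Part 1: translating the counterfactual conditional independence (purely spurious) into the distributional statement above under $P^{\mathrm{aug}}$ requires careful bookkeeping. I must verify that the freshly induced spurious factor $Z^{\star}$ is independent of $Y$ (and not merely of $X_z^\perp$) after randomizing the transformation, and that conditioning on $t(X)$ is equivalent to conditioning on the pair $(X_z^\perp, Z^{\star})$. A secondary subtlety is the interpretation of ``enumerates all potential outcomes of $Z$''---under the strict reading that each potential outcome is hit by a single $t \in \mathcal{T}$, Part 2 is one line; under the weaker reading that only compositions suffice, Part 2 requires a closure argument along chains $X \to t_1(X) \to t_2(t_1(X)) \to \cdots$ to recover the full CF-invariance condition.
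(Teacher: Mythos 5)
Your proposal is correct and follows essentially the same route as the paper: Part 2 by showing the hard-consistency feasible set coincides with $\Phi_{\text{cf-inv}}(\E)$, and Part 1 by using label-preservation plus enumeration to write the augmented data as $X(Z^\star)$ with $Z^\star$ randomized independently of $Y$, then invoking purely-spuriousness to get $P(Y \mid X) = P(Y \mid X_z^\perp)$ under the augmented law. The only difference is cosmetic: the paper sidesteps the conditioning subtlety you flag (whether $\sigma(t(X))$ matches $\sigma(X_z^\perp, Z^\star)$) by instead factorizing the augmented joint as $\tilde{P}(X,Y) = P(Y \mid X_z^\perp)\,\tilde{P}(X)$ via the mixture of interventional distributions $\int P(X, Y \mid do(z))\, d\tilde{P}(z)$.
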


Thus for CISA domains, ideal data augmentation (i.e. all possible label-preserving transformations) will exactly learn CF-invariant representations.
Moreover, this holds irrespective of what the true underlying causal structure is.
Accordingly, such data augmentation would be the gold standard for domain-invariant representation learning.

However, in practice, we cannot satisfy the idealized conditions.
Applying predefined transformations without thinking about the specific applications can lead to violations of the label preserving condition. For example in a bird classification task, changing color may really change the bird species (this is called manifold intrusion in \citet{guo2019mixup}). Further, heuristic transformations often cannot enumerate all potential outcomes of $X(z)$. Indeed, deep models can sometimes memorize predetermined transformations and fail to generalize with previously unseen transformations \citep{vasiljevic2016examining, geirhos2018generalisation}. 

Considering the limitations of using handcrafted transformations, a natural idea is to replace them with transformations learned from data.
However, in practice, $Z$ is unknown and we only observe the data domains $E$. Then, learning transformations must rely either on detailed structural knowledge of the problem \citep[e.g.,][]{robey2021model}, or on some distributional relationship between $E$, $Y$ and $X$ and $t(X)$ \citep[e.g.,][]{goel2020model}. Since $t(X)$ is only used for the representation learning, this is equivalent to learning based on some distributional criteria involving $E$, $Y$, and $\phi(X)$---the subject of the next section.

\subsection{Distributionally Invariant Learning}

Many domain-invariant representation learning methods work by enforcing some form of distributional invariance.
To explain these succinctly, we introduce a random variable $E$ to label the environment in which the data is observed;
so $X, Y \given E=e \dist P_e$. 

There are three notions of distributional invariance studied in the literature.
The first is $\phi(X) \independent E$ \citep[e.g.,][]{muandet2013domain,ganin2016domain}. The intuition is that this will preserve only features that cannot discriminate between training and test domains.
The second notion is $\phi(X) \independent E | Y$ \citep[e.g.,][]{li2018deep,long2018conditional}.
This allows us to have some features that are dependent on the environment, but only if they are redundant with the label.
Finally, some causally-motivated approaches have considered learning representations so that $Y \independent E | \phi(X)$ \citep[e.g.,][]{peters2016causal, wald2021calibration}.

The question is: when, if ever, are each of these distributional invariances the right approach for domain-invariant representation learning?
In the CISA setting, the following theorem provides an answer in terms of the shared causal structure of the environments.
\begin{restatable}{theorem}{signature}\label{thm:distributional-signature}
  Suppose $\phi$ is a CF-invariant representation. 
  \begin{compactenum}
    \item if the causal graph is anti-causal,  $\phi(X) \independent E | Y$;
    \item if the causal graph is conf-outcome,  $\phi(X) \independent E$;
    \item if the causal graph is conf-descendant,  $Y \independent E | \phi(X)$.
  \end{compactenum}
\end{restatable}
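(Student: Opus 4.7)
My plan is to reduce each of the three claims to a d-separation statement on the template DAGs classified in the preceding CISA classification theorem. By definition of $X_z^\perp$, any CF-invariant representation is $X_z^\perp$-measurable, so $\phi(X) = g(X_z^\perp)$ for some measurable $g$. I first establish each independence with $\phi(X)$ replaced by $X_z^\perp$, then transfer back to $\phi(X)$ using the standard fact that independences are preserved under measurable functions on the variable being tested. Throughout, I will exploit the fact that in all three templates $E$ has the unique outgoing edge $E \to U$, so every path emanating from $E$ begins $E \to U \to \cdots$, and it suffices to analyze paths from $U$ onwards in each DAG.

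\textbf{Case analysis by d-separation.} In the anti-causal template, $X_z^\perp$ has $Y$ as its only parent; every $U$-$X_z^\perp$ path either passes through $Y$ as a chain node (blocked by conditioning on $Y$) or meets the colliders $X_z$ or $Z$, which remain closed because $Y$ is not a descendant of either. Hence $X_z^\perp \independent E \mid Y$. In the conf-outcome template, $X_z^\perp$ has no incoming edges, so every $U$-$X_z^\perp$ path must approach $X_z^\perp$ through one of the colliders $Y$, $X_z$, or $Z$; with empty conditioning set, none of these is opened, yielding $X_z^\perp \independent E$. In the conf-descendant template, the unique directed path from $U$ to $Y$ is $U \to X_z^\perp \to Y$, and every other $U$-$Y$ connection routes through a collider at $X_z$ or $Z$, which is not opened by conditioning on $X_z^\perp$ because $X_z^\perp$ is a parent, not a descendant, of these collider nodes. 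Conditioning on $X_z^\perp$ blocks the single chain $U \to X_z^\perp \to Y$, giving $Y \independent E \mid X_z^\perp$.

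\textbf{Main obstacle.} Lifting to a general CF-invariant $\phi$ is immediate in the anti-causal and conf-outcome cases, since both marginal and $Y$-conditional independence pass to any measurable function of $X_z^\perp$. The delicate case is conf-descendant, where the conditioning variable itself is replaced by a coarsening $\phi(X) = g(X_z^\perp)$: in general, $Y \independent E \mid X_z^\perp$ does not imply $Y \independent E \mid g(X_z^\perp)$, since $g$ can discard information about $X_z^\perp$ that $E$ still carries through $U$. I expect to handle this by reading the theorem in conjunction with \eqref{eq:optimal-cisa-rep}: the CF-invariant representations of interest are sufficient for $Y$ in the sense that $g$ preserves $P(Y \mid X_z^\perp)$, and under this sufficiency the conditional independence does descend to $\phi(X)$ via a direct factorization argument. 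Making this sufficiency step precise is the main technical obstacle; the remainder of the proof is d-separation bookkeeping on the three template DAGs combined with the measurability characterization of CF-invariance.
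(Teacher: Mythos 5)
Your d-separation case analysis and the reduction through $X_z^\perp$-measurability are exactly the paper's argument; the paper's entire proof is a three-sentence version of your first two paragraphs. Where you go beyond it is in flagging the transfer from $X_z^\perp$ to $\phi(X)=g(X_z^\perp)$ in the conf-descendant case, and you are right to flag it: the paper's ``thus the claim follows'' is valid for cases 1 and 2, where a measurable function of a variable that is independent of $E$ (marginally, or given $Y$) remains so, but not for case 3, where the coarsened variable sits in the \emph{conditioning set}. As literally stated, case 3 fails for arbitrary CF-invariant $\phi$: a constant $\phi$ is CF-invariant, yet $Y\independent E$ is false in the conf-descendant template because of the open chain $E\to U\to X_z^\perp\to Y$; consequently the set inclusion $\Phi_{\text{cf-inv}}(\E)\subset\Phi_{\text{DI}}(\Etrain)$ asserted after the theorem also fails in that case. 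Your proposed repair via sufficiency does close the gap, and the step you call the main obstacle is in fact a one-line tower-property computation: if $P(Y\in A\mid X_z^\perp)=P(Y\in A\mid g(X_z^\perp))$ a.s., then
\begin{align*}
P(Y\in A\mid g(X_z^\perp),E)
&=\EE{}\left[P(Y\in A\mid X_z^\perp,E)\mid g(X_z^\perp),E\right]\\
&=\EE{}\left[P(Y\in A\mid X_z^\perp)\mid g(X_z^\perp),E\right]
=P(Y\in A\mid g(X_z^\perp)),
\end{align*}
using $Y\independent E\mid X_z^\perp$ in the second equality and sufficiency in the third. So your plan is the paper's proof plus the step the paper omits; to finish, either add sufficiency of $\phi$ for $Y$ as a hypothesis in case 3 (it is satisfied by the risk-minimizing CF-invariant representations targeted by \cref{eq:optimal-cisa-rep} for standard losses whose population minimizer is the conditional distribution of $Y$), or restrict case 3 to representations with $\sigma(\phi(X))=\sigma(X_z^\perp)$.
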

\begin{remark}
This theorem looks similar to \citet[][Thm.~3.2]{veitch2021counterfactual}. This is deceptive; here we observe the environment $E$, whereas they assumed observations of the spurious factors $Z$. 
\end{remark}

In words: each of the distributional invariances arises as a particular implication of CF-invariance.
In general, the distributional invariance is a weaker condition than CF-invariance.
In this sense, distributional invariance relaxes of CF-invariance; i.e., $\Phi_{\text{cf-inv}}(\E) \subset \Phi_{\text{DI}}(\Etrain)$.
However, we can directly use observed data to measure whether a distributional invariance is satisfied. 
Then, we can enforce distributional invariances at training time.
Indeed, this is exactly what the distributional invariance methods do.

This suggests a strategy for relaxing the CISA invariant learning problem, \cref{eq:optimal-cisa-rep}. 
Namely, for a given problem, determine the common causal structure, select the distributional invariance implied by that causal structure, and then learn a representation that satisfies that distributional invariance.
That is, we learn according to:  
\begin{align*}
  \min_{\phi \in \Phi_{\text{DI}}(\Etrain), w} & E_{P_{\Etrain}}[L(Y, (w \circ \phi)(X))],
\end{align*}
where $\Phi_{\text{DI}}(\E)$ is the set of representations that satisfy the independence criteria.

In summary: distributionally invariant learning methods are relaxations of the idealized CISA invariant learning, \cref{eq:optimal-cisa-rep}. This relaxation is justified only when we use the distributional invariance that matches the underlying causal structure.
Indeed, enforcing the wrong distributional invariance will directly contradict CF-invariance, and may \emph{increase} dependency on the spurious factors of variation. Distributional invariance is only a relaxation when we actually get the underlying causal structure correct!

\subsection{Invariant Risk Minimization}\label{ssec:irm}

The Invariant Risk Minimization (IRM) paradigm \citep{arjovsky2019invariant} aims to find representations that admit a single predictor that has the optimal risk across all domains. That is, the set of IRM representations is
\begin{equation*}
  \Phi_{\text{IRM}}(\E) := \{\phi: \exists \ w \text{ st } w \in \argmin_{\bar{w}} E_{P_e}[L(Y, (\bar{w} \circ \phi)(X))] \  \forall e \}
\end{equation*}
Here, the question is: when, if ever, does the IRM procedure correspond to a relaxation of the CISA procedure \cref{eq:optimal-cisa-rep}.
Again the answer will turn out to depend on the underlying causal structure.

\paragraph{Confounded Descendant}
IRM is justified in the case where $X$ includes both causes and descendants of $Y$, and the invariant predictor should use only information in the parents of $Y$.
As explained above, this coincides with the CISA invariance---and with distributional invariance---in the confounded-descendant case. 
Indeed, we can view IRM as a relaxation of $Y \independent E | \phi(X)$. Instead of asking the distribution $P^e(Y | \phi(X) = h)$ to be invariant across domains for every $h \in \mathcal{H}$, we only require the risk minimizer under $P^e(Y | \phi(X) = h)$  to be invariant. 
So, a partial answer is that IRM is a relaxation of distributional invariance (and CF-invariance) when the underlying causal structure is confounded descendant.

\paragraph{Confounded Outcome}
However, the situation is harsher in the case of other causal structures. For anti-causal and confounded-outcome problems, the typical case is $\Phi_{\text{IRM}}(\E) = \emptyset$. 
For confounded outcome, the idea of having the same risk minimizer across domains does not make sense without further assumptions. The general case has $Y \leftarrow f(X, U, \eta)$ where $\eta$ is noise independent of $X, U$. For example when $U = E$ we can write $Y \leftarrow f_E(X, \eta)$ so there could be arbitrarily different relationships between $X$ and $Y$ in every domain.
It may be possible to circumvent this by making structural assumptions on the form of $f$; e.g., there is an invariant risk minimizer in the case where the effect of $U$ and $X$ is additive \citep{veitch2021counterfactual}. 

\paragraph{Anti-Causal Case}
In the anti-causal case, there is also usually no invariant predictor. 
Here, $\Pr(X_z^\perp \given Y)$ is invariant across domains.
However, the risk of a predictor in domain $e$ will also depend on $\Pr_e(Y)$, which need not be invariant.
That is, there is no invariant risk minimizer because of \emph{prior shift} \citep{zhang2013domain}. 

However, it turns out there is nevertheless an important connection between IRM and CISA in the anti-causal case. 
To make this clear, we introduce a generalization of IRM that is appropriate for this setting.
The issue is that $P_e(Y)$ can shift across domains, so we simply define the risk minimizer under a fixed reference distribution $P_0(Y)$.
\begin{defn}
We define the set of representations satisfying \defnphrase{g-IRM} as: 
\begin{align*}
  &\Phi_{\text{g-IRM}}(\E, P_0) := \\
  &\{\phi: \exists \ w \text{ st } w  \in \argmin_{\bar{w}} E_{P_e}[\frac{P_0(Y)}{P_e(Y)}L(Y, (\bar{w} \circ \phi)(X))] , \forall e \}
\end{align*}
where $P_0(.)$ is some reference distribution for $Y$. 
\end{defn}

It turns out this generalization of IRM is a relaxation of CISA in the anti-causal case.
\begin{restatable}{theorem}{irm}\label{thm:irm}
Let $\{P_e\}_{e \in \E}$ satisfy CISA, then 
\begin{compactenum}
  \item if $\{P_e\}_{e \in \E}$ is confounded-descendant, then $\Phi_{\text{DI}}(\E) \subset \Phi_{\text{IRM}}(\E)$ 
  \item if $\{P_e\}_{e \in \E}$ is anti-causal, then $\Phi_{\text{DI}}(\E) \subset \Phi_{\text{g-IRM}}(\E, P_0)$ for any chosen $P_0$ 
\end{compactenum}
\end{restatable}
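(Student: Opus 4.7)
The plan is to apply \Cref{thm:distributional-signature} to translate membership in $\Phi_{\text{DI}}(\E)$ into an explicit conditional independence, and then in each case rewrite the (re-weighted) risk so that its inner part no longer depends on the environment $e$. Once that is done, the same pointwise Bayes-optimal predictor will minimize the domain-$e$ (re-weighted) risk for every $e$, establishing the required inclusion into $\Phi_{\text{IRM}}(\E)$ or $\Phi_{\text{g-IRM}}(\E, P_0)$.

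\emph{Confounded-descendant case.} Suppose $\phi \in \Phi_{\text{DI}}(\E)$. By \Cref{thm:distributional-signature}(3), this means $Y \indpt E \mid \phi(X)$, so there is a common conditional $P_0(Y \mid \phi(X)=h) = P_e(Y \mid \phi(X)=h)$ for all $e$. Writing the domain-$e$ risk by iterated expectation gives
\begin{equation*}
  E_{P_e}[L(Y,(w\circ\phi)(X))] = E_{P_e(\phi(X))}\bigl[\, E_{P_0(Y \mid \phi(X))}[L(Y, w(\phi(X)))] \,\bigr].
\end{equation*}
For each value $h$ of $\phi(X)$, define $w^*(h) \in \argmin_a E_{P_0(Y \mid \phi(X)=h)}[L(Y,a)]$. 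Since the inner conditional risk does not depend on $e$, the pointwise minimizer $w^*$ is $e$-independent, and $w^* \circ \phi$ minimizes the domain-$e$ risk for every $e$. Hence $\phi \in \Phi_{\text{IRM}}(\E)$.

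\emph{Anti-causal case.} Suppose $\phi \in \Phi_{\text{DI}}(\E)$. By \Cref{thm:distributional-signature}(1), we have $\phi(X) \indpt E \mid Y$, so there is a common conditional $P_0(\phi(X) \mid Y=y) = P_e(\phi(X) \mid Y=y)$ for all $e$ (with the reference $P_0$ as chosen for g-IRM). Conditioning on $Y$ inside the reweighted risk gives
\begin{align*}
  E_{P_e}\!\left[\tfrac{P_0(Y)}{P_e(Y)} L(Y, (w\circ\phi)(X))\right]
  &= \int \tfrac{P_0(y)}{P_e(y)} \, E_{P_e(\phi(X) \mid Y=y)}[L(y,w(\phi(X)))] \, P_e(y)\,dy \\
  &= \int P_0(y)\, E_{P_0(\phi(X) \mid Y=y)}[L(y,w(\phi(X)))] \, dy,
\end{align*}
where the density reweighting exactly cancels the domain-specific $P_e(Y)$, and the conditional $P_e(\phi(X)\mid Y)$ is replaced by its invariant counterpart thanks to DI. The right-hand side no longer depends on $e$, so its minimizer $w^*$ is the same across domains, and $\phi \in \Phi_{\text{g-IRM}}(\E, P_0)$.

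The two cases are essentially parallel, but the main subtlety—and the reason g-IRM rather than ordinary IRM appears in part (2)—is the handling of prior shift: in the confounded-descendant case $P_e(\phi(X))$ can fluctuate harmlessly outside the inner conditional risk, whereas in the anti-causal case $P_e(Y)$ multiplies the inner conditional risk and would otherwise make the risk minimizer $e$-dependent. The reweighting by $P_0(Y)/P_e(Y)$ is precisely the device that neutralizes this, and checking that the cancellation is legitimate (e.g., that $P_e(Y) > 0$ on the support of $P_0(Y)$ and that we may interchange the pointwise minimization with the outer integral) is the only technical obstacle; both hold under mild regularity assumed throughout the paper.
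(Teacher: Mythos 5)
Your proof is correct and follows essentially the same route as the paper's: in the confounded-descendant case you reduce to the invariance of $P_e(Y \mid \phi(X))$ so the pointwise Bayes-optimal $w^*$ is domain-independent, and in the anti-causal case you condition on $Y$ and use $\phi(X) \indpt E \mid Y$ so the reweighting by $P_0(Y)/P_e(Y)$ cancels the prior shift and the objective becomes domain-independent. Your version is somewhat more explicit about the iterated-expectation step and the regularity caveats, but the argument is the same.
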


\paragraph{IRM with Rebalancing}
This result tells us that we can approximate the idealized CISA procedure by using the variant of IRM that matches the true causal structure of the problem.
Importantly, the difference between the two IRM variants is equivalent to whether we standardize the prior distribution $P_e(Y)$ across environments (anti-causal case) or not (confounded descendant).
Interestingly, this standardization is already done routinely in practice when using IRM (often undocumented). 
It turns out that this is not a benign optimization trick, but a fundamental change in the assumed underlying causal structure!
Indeed, in \cref{sec:experiment}, we show that this standardization is critical even for the demonstration toy example in the original IRM paper! And, if we use a version of IRM mismatched to the underlying causal structure, we fail to learn a good representation.

\subsection{Summary}
We now summarize the main results of this section.

   \textbf{Idealized data augmentation is the gold standard} if it's possible to enumerate all label-preserving transformations. In this case, data augmentation always yields the ideal invariant representation, irrespective of the causal structure of the problem \Cref{thm:data-aug}. This enumeration is impossible in general as we don't even know what the spurious factors $Z$ are. Still, using augmentation with label-preserving transformations (but not exhaustively) enforces a relaxation of the idealized invariant learning.
   For example, perturbing photo brightness in the retinopathy example might help with sensitivity to lighting level. 

   \textbf{Distributional invariance} relaxes CF-invariance if and only if chosen to match the underlying causal structures (\Cref{thm:distributional-signature}). This can be a good option when full augmentation is not possible. In the retinopathy example,
   we now see that the conditional distribution invariance $\phi(X) \independent E \given Y$ would help reduce sensitivity to the spurious factors (it's an anti-causal problem), and the other distributional invariances would not.

   \textbf{(Generalized) IRM further relaxes distributional invariance} for anti-causal and confounded-descendant problem, when it's chosen to match the causal structure of the problem (\Cref{thm:irm}). It weakens the full independence criteria to use just the implication for a single natural test statistic: the loss of the model. 

\section{Insights for Robust Prediction}\label{sec:insights}
Often, learning domain-invariant representations is an intermediate step towards learning robust predictors.
To do this, we train a predictor $\hat{w}$ on top of the invariant representation $\phi(X)$ using data from the training domains.
Then, given an example with features $x$, in a new domain, we predict $y$ using $\hat{w}(\phi(x))$.
The hope is that the use of the invariant representation will make the predictor robust across domains. 
We now discuss the implications of our domain-invariant representation learning results for robust prediction.

\subsection{The relationship between invariance and robustness depends on the underlying causal structure}
\Cref{ssec:irm} characterizes when a predictor trained on top of an invariant CISA representation will be a risk minimizer in all domains.
This is a reasonable ideal for what robust to domain shift might mean formally.

\paragraph{Confounded Descendant}
For confounded-descendant problems, we have seen that the two concepts, ``risk minimizer in all domains'' and ``optimal predictor on top of invariant representation'' are the same.
So, for this causal structure, invariant representation learning yields robust prediction.

\paragraph{Confounded Outcome}
The general situation is more complicated.
Without further structural assumptions, we can't say anything about the confounded outcome case. 
This is fundamental: even after we banish the effect of spurious factors of variation, there is still need not be any fixed predictor that is stable across environments.

\paragraph{Anti-Causal}
For anti-causal problems, there is no invariant risk minimizer in general because $P_e(Y)$ can change across domains.
However, this is essentially the only thing that can go wrong. If we train a predictor $w$ on top of an invariant representation $\phi$ in domain $e$, then its risk in a new CISA domain $e'$ is:
\begin{equation}
  E_{P_e}[\frac{P_{e'}(Y)}{P_e(Y)}L(Y, (w \circ \phi)(X))].
\end{equation}
That is, the risk can only inflate proportionate to how much the distribution of $Y$ changes across domains.
Accordingly, if the prior distribution doesn't shift very much between training and deployment, then the training-domain optimal predictor trained on the invariant representation will be nearly optimal in the test environment. This kind of limited label shift seems common in practice.
Acccordingly, invariant representation learning will often lead to robust prediction in the anti-causal setting.

\subsection{Further Insights}

We make some additional observations on how we might go about using domain-invariant representation learning for robust prediction.

\paragraph{Data augmentation helps in most cases}
Label-preserving data augmentation won't hurt domain generalization and can often help. This is true no matter the underlying causal structure of the problem. This matches empirical benchmarks where data augmentations usually help domain generalization performance, sometimes dramatically \citep{wiles2021fine,koh2021wilds,gulrajani2020search}. For example, \citet{wiles2021fine} finds that simple augmentations used in \cite{krizhevsky2012imagenet} generally improves performance when "augmentations approximate the true underlying generative model". 

\paragraph{Pick a method matching the true causal structure}
Many papers apply distributional invariance approaches with no regard to the underlying causal structure of the problem. In particular, many tasks in benchmarks have the anti-causal structures, but the methods evaluated do not include those enforcing $\phi(X) \independent E | Y$ \cite{koh2021wilds}. Indeed, \Citet{tachet2020domain} find that methods enforcing $\phi(X) \independent E | Y$ consistently improve over methods that enforce $\phi(X) \independent E$---retrospectively, this is because they benchmark on anti-causal problems. \Citet{wiles2021fine} finds that learned data augmentation \citep{goel2020model} consistently improves performance in deployment. This method can be viewed as enforcing $\phi(X) \independent E | Y$ and, again, the benchmarks mostly have anti-causal structure.

\section{Related Work}\label{sec:related-work}
\parhead{Causal invariance in domain generalization} Several works \citep{peters2016causal,rojas2018invariant,arjovsky2019invariant, lu2021nonlinear} specify domain shifts with causal models. 
In these approaches, the set of environments is modeled as those achievable by intervening on \emph{any} node in the causal graph, other than the label $Y$.
This creates a large set of possible environments.
As such, the set of invariant predictors is small. 
(Fewer functions are invariant across more environments.)
In this setting, the optimal invariant predictor turns out to be $P(Y \given \pa(Y))$. However, this rules out, e.g., cases where $Y$ causes $X$ \citep{scholkopf2012causal, lipton2018detecting}. As we have seen, CISA matches the previous notion in the confounded descendant case, and also allows a more general notion of invariance.

\parhead{Domain generalization methods} There are many methods for domain generalization; we give a categorization in the introduction. There have been a number of empirically-oriented surveys testing domain generalization methods in natural settings \citep{koh2021wilds, wiles2021fine, gulrajani2020search}. These find that no method consistently beats ERM, but many methods work well in at least some situations. Our aim here is to give theoretical insight into when each might work.

\parhead{Robust methods}\label{sec:related-work-robust} The CISA framework is reasonable for many real-world problems, but certainly not all. There are other notions of domain shifts and differently motivated methods that do not fit under this framework. For example, many works \citep[e.g.,][]{sagawa2019distributionally, liu2021just, ben2022pada, eastwood2022probable} assume the testing domains are not too different from the training domains (e.g., test samples are drawn from the mixture of training distributions).%
 These methods are complementary to the invariant representation learning approaches we study.%

\section{Conclusion}
In this paper, we have studied whether invariant representation learning methods actually learn invariant structure, and whether this structure is useful for robust prediction.
To do so, we have introduced a notion of domain shift that admits a cannonical notion of ``invariant structure''.
We have seen that data augmentation, distributional invariance learning, and risk invariant learning can be (sometimes) understood as relaxations of this notion.
Whether this holds---and whether the resulting invariant structure is useful---depends on the underlying causal structure of the problem.

As an additional demonstration of the importance of getting the causal structure right, in \cref{sec:experiment} we conduct a simple experiment showing that invariant risk minimization fails when the causal structure is anti-causal. Remarkably, the canonical IRM demonstration example---colored MNIST---has anti-causal structure. We show that simply changing $\Pr(Y)$ across environments causes IRM to fail in (an even simpler version of) this example.

The results in this paper suggest some practical guidance for building robust predictors---e.g., if $Y$ causes $X$, then use methods that enforce $\phi(X) \independent E | Y$. However, this guidance is predicated on CISA domain shifts. Although CISA is quite general as a notion of ``domain shift that admits an invariant structure'', it remains an open question how often real-world domain shifts actually admit invariant structure.

\newpage
\printbibliography

\newpage
\appendix

\section{Experimental Demonstration}\label{sec:experiment}
A key point in the above analysis is that the invariant representation learning method we use must match the underlying  causal structure of the data.
In \Cref{sec:insights}, we discuss this point in the context of existing large-scale, real-world experiments.
However, such demonstrations rely on out of domain performance of various methods, which depends on both the match to the true causal structure but also on
lower level implementation issues---e.g., hyperparameter tuning, overfitting, or optimization.
This can make it somewhat difficult to draw precise conclusions.

To demonstrate the role of causal structure clearly we now study it in a simple case: two bit environments from \cite{kamath2021does}.
\footnote{Code would be made available upon accepted as a conference paper. }
This is a toy setting mimicing the well-known colored MNIST example used to demonstrate Invariant Risk Minimization \cite{arjovsky2019invariant}.
We create two domain shifts. The first is anti-causal and the second is confounded-descendant.
As predicted from \Cref{thm:distributional-signature}, we find that IRM fails but gIRM works in the first case, and vice versa in the second. \footnote{Our experiment uses IRMv1 (and analogously gIRMv1), which is shown to fail with some choices of $\alpha$ \cite{kamath2021does} because of the relaxation from IRM to IRMv1. We avoid those choices of $\alpha$ so that we can focus on the high-level question instead of being distracted by the fragility of IRMv1. Below we use IRM and IRMv1 (also gIRM and gIRMv1) exchangebly.}

In short, we find that even in the simplest possible case, causal structure plays a key role.
In particular, vanilla IRM fails totally on an apparently innocuous modification of the data generating process (prior shift), and this is readily fixed by modifying the method to match the correct causal structure.
Further, the experiments demonstrate that importance sampling based on $Y$---an apparently innocent technique---actually has causal implications and can destroy invariant relaltionship under certain causal structures (e.g. confounded-descendant in the second example).

\subsection{two-bit-envs (anti-causal)}\label{sec:two-bit-anti-causal}

For each domain $e \in \E$, the data generating process is as follows:
\begin{align*}
  & Y \leftarrow \text{Rad}(\gamma_e)\\
  & X_1 \leftarrow Y \cdot \text{Rad}(\alpha)\\
  & X_2 \leftarrow Y \cdot \text{Rad}(\beta_e)
\end{align*}
where $\text{Rad}(\pi)$ is a random variable taking value $-1$ with probability $\pi$ and $+1$ with probability $1 - \pi$. 

This is a simplification of the ColoredMNIST problem (\cite{arjovsky2019invariant}): we know $X_1$ (corresponds to the digit shape) and $Y$ have invariant relationship: $P(X_1 = Y) = 1 - \alpha$. The correlation between $X_2$ (corresponds to color) and $Y$ is spurious, as $P(X_2 = Y) = 1 - \beta_e$ that varies across domains. The label imbalance across domains is due to prior-shift: $P(Y = -1) = \gamma_e$.  We observe 4 training domains and predict on 1 test domain. We use $\alpha = 0.25$, and set $\beta_e = 0.1, 0.2, 0.15, 0.05$ in the training domains respectively, so that using the spurious correlation could get better in-domain performance. However, in the test domain the spurious correlation is flipped: we use $\beta_e = 0.9$ so the out-of-domain performance would be very bad if $X_2$ is used. Finally, we use $\gamma_e = 0.9, 0.1, 0.7, 0.3$ in training domains to create prior-shift. In the test domain the label is balanced ($\gamma_e = 0.5$). These 4 domains constitute $\E$. The goal is to find a predictor $f \in \{ \{+1, - 1\}^2 \rightarrow \mathcal{R} \}$ (prediction is $\hat{y} := \text{sign}(f(x))$ for data $(x, y)$). The optimal 75\% test accuracy is obtained when $f$ satisfies $f(1, \cdot) > 0$ and $f(-1, \cdot) \leq 0$.

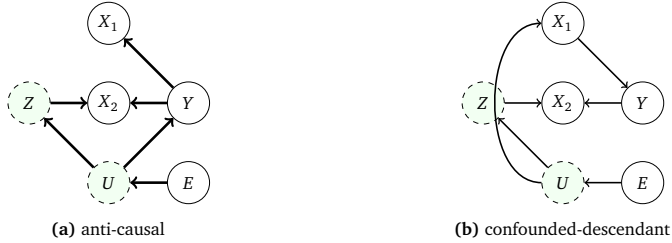
\begin{figure}%
  \captionsetup{width=\columnwidth}
  \begin{subfigure}[b]{.45\columnwidth}
      \centering
      \scalebox{0.7}{
        \begin{tikzpicture}[
    var/.style={draw,circle,inner sep=0pt,minimum size=0.8cm},
    latentconf/.style={draw,circle,dashed,inner sep=0pt,minimum size=0.8cm, fill=green!5}
    ]
        \node (X) [var] {$X_2$};
        \node (Y) [var, right=0.7cm of X] {$Y$};
        \node (Z) [latentconf, left=0.7cm of X] {$Z$};
        \node (Xzperp) [var, above=0.7cm of X] {$X_1$};
        \node (U) [latentconf, below=0.7cm of X] {$U$};
        \node (E) [var, below=0.7cm of Y] {$E$};
        \path[->, line width=0.5mm] (Z) edge (X); 
        \path[->, line width=0.5mm] (U) edge (Z); 
        \path[->, line width=0.5mm] (U) edge (Y); 
        \path[->, line width=0.5mm] (Y) edge (X); 
        \path[->, line width=0.5mm] (Y) edge (Xzperp); 
        \path[->, line width=0.5mm] (E) edge (U); 

    \end{tikzpicture}}
      \caption{anti-causal}
      \label{fig:two-bits-a}
  \end{subfigure}
  \begin{subfigure}[b]{.45\columnwidth}
    \centering
    \scalebox{0.7}{
      \begin{tikzpicture}[
    var/.style={draw,circle,inner sep=0pt,minimum size=0.8cm},
    latentconf/.style={draw,circle,dashed,inner sep=0pt,minimum size=0.8cm, fill=green!5}
    ]
        \node (X) [var] {$X_2$};
        \node (Y) [var, right=0.7cm of X] {$Y$};
        \node (Z) [latentconf, left=0.7cm of X] {$Z$};
        \node (Xzperp) [var, above=0.7cm of X] {$X_1$};
        \node (U) [latentconf, below=0.7cm of X] {$U$};
        \node (E) [var, below=0.7cm of Y] {$E$};
        \path[->, line width=0.3mm] (Z) edge (X); 
        \path[->, line width=0.3mm] (E) edge (U); 
        \path[->, line width=0.3mm] (Xzperp) edge (Y);
        \path[->, bend left=90, line width=0.3mm] (U) edge (Xzperp); 
        \path[->, line width=0.3mm] (U) edge (Z); 
        \path[->, line width=0.3mm] (Y) edge (X); 

    \end{tikzpicture} }
    \caption{confounded-descendant}
    \label{fig:two-bits-c}
\end{subfigure}
  \caption{Causal graphs that are both are CISA-compatible, and can explain the data generating process (for \Cref{sec:two-bit-anti-causal} and \Cref{sec:two-bit-conf-desc} respectively). From the original data generating process, we introduced extra variables $U, Z$ and set $Z \leftarrow U$ and $U \leftarrow E$.}  
    \vspace{-15pt}
\end{figure}

The original two-bit-envs problem has $\gamma_e = 0.5$ across domains, and IRM obtains one optimal predictor. However, it fails in this modified problem with prior-shift (\Cref{tab:anti-causal}). To understand and fix its failure, we can find a causal interpretation that both explains the data and is CISA-compatible: introduce $U, Z$ and set $Z \leftarrow U; U \leftarrow E$ as shown in \Cref{fig:two-bits-a}. Then this domain shift problem falls under anti-causal subtype, and a CF-invariant predictor should only rely on $X_1$. IRM does not enforce the right invariance and fails to remove spurious $X_2$ as a result. Instead, we should use gIRM to (partially) enforce CF-invariance. Indeed, gIRM successfully forces the model to discard $X_2$ and obtain the optimal test accuracy (\Cref{tab:anti-causal}). 
Note that in the original two-bit-envs problem without prior-shift, both $X_1 \leftarrow Y \cdot \text{Rad}(\alpha)$ and $Y \leftarrow X_1 \cdot \text{Rad}(\alpha)$ can explain the data. Therefore we we can \emph{interpret} the data generating process as either anti-causal or confounded-descendant. So both IRM and gIRM (partially) enforce CF-invariance --- in fact $\Phi_{\text{IRM}}(\E) = \Phi_{\text{g-IRM}}(\E)$ so the resulting predictor is the same.

\begin{table}
\captionsetup{width=\columnwidth}
\caption{In two-bit (anti-causal) experiment, IRMv1 predictor fails to discard spurious features because of prior-shift in $Y$. However, after modifying the method to match the underlying causal structure (using gIRMv1), we can recovers a CF-invariant predictor that obtains optimal test accuracy. The results are under cross-entropy loss (similar for for squared loss). }
$$
\begin{array}{|c|c|c|c|c|}
\hline & \multicolumn{2}{|c|}{f_{\mathrm{IRMv1}}} & \multicolumn{2}{c|}{f_{\mathrm{gIRMv1}}} \\
\cline { 2 - 5 } & X_{2}=+1 & X_{2}=-1 & X_{2}=+1 & X_{2}=-1 \\
\hline X_{1}=+1 & 2.53 & -0.93 & 1.16 & 1.08 \\
\hline X_{1}=-1 & -0.08 & -3.19 & -1.11 & -1.03 \\
\hline
\end{array}
$$
\label{tab:anti-causal}
\end{table}

\begin{table}
  \captionsetup{width=\columnwidth}
  \caption{In two-bit (confounded-descendant) experiment, directly applying IRMv1 recovers a CF-invariant predictor that obtains optimal test accuracy. On the contrary, applying importance sampling can destroy the invariant relationship in data --- as a result, gIRMv1 learns only the trivial invariant predictor. 
  The results are under cross-entropy loss (similar for for squared loss).}
  $$
  \begin{array}{|c|c|c|c|c|}
  \hline & \multicolumn{2}{|c|}{f_{\mathrm{IRMv1}}} & \multicolumn{2}{c|}{f_{\mathrm{gIRMv1}}} \\
  \cline { 2 - 5 } & X_{2}=+1 & X_{2}=-1 & X_{2}=+1 & X_{2}=-1 \\
  \hline X_{1}=+1 & 1.1 & 1.1 & 0 & 0 \\
  \hline X_{1}=-1 & -1.1 & -1.1 & 0 & 0 \\
  \hline
  \end{array}
  $$
  \label{tab:conf-descend}
  \end{table}

\subsection{two-bit-envs (conf-desc)}\label{sec:two-bit-conf-desc}

Our implementation of enforcing gIRM regularization \footnote{There are two ways to implement gIRM. The first way is apply importance sampling to the regularization term only; the second way is to apply it to both the loss term and regularization term. In these two examples, the two implementations give the same result. Thus to better illustrate our point on importance sampling, we study the second implementation.} is equivalent to: first, perform importance sampling with weight $w_e((x_1, x_2), y) = \frac{P_0(y)}{P_e(y)}$; next, enforce IRM regularization. Similarly, it's a common practice to perform importance sampling based on label $Y$ when it's imbalanced. However, as we shall show in this example, importance sampling can remove invariant features under certain causal structures. 

In this example, the data generating process is as follows: for each domain $e \in \E$, 
\begin{align*}
  & X_1 \leftarrow \text{Rad}(\gamma_e)\\
  & Y \leftarrow X_1 \cdot \text{Rad}(\alpha)\\
  & X_2 \leftarrow Y \cdot \text{Rad}(\beta_e)
\end{align*}
Compared to the previous example, we do not change $P(X_1 = Y)$ and $P(X_2 = Y)$. The only change is on how the label imbalance is created: through covariate shift in $X_1$. We use the same parameters for $\alpha, \beta_e, \gamma_e$. 

Again $P_e(Y)$ is different across domains, but this time gIRM forces the model to always predict $0$ (\Cref{tab:conf-descend})! To understand why, we find a CISA-compatible DAG that explains the data as shown in \Cref{fig:two-bits-c} (similarly introduce $U, Z$ and set $Z \leftarrow U; U \leftarrow E$). This is confounded-descendant and IRM enforces the right invariance whereas gIRM enforces the wrong one.

To understand how the importance sampling destroys even the invariant relationship between $X_1$ and $Y$, we look at the target distribution after reweighting (call it $Q_e$). Since the weighting function is $w_e(x_1, y) = \frac{P_0(x_1)}{P_e(y)}$ and that $w_e(x_1, y) = \frac{Q_e(x_1, y)}{P_e(x_1, y)}$, we have $Q_e(x_1, y) = P_e(x_1, y) \frac{P_0(y)}{P_e(y)}$. Observe that the probability $Q_e(X_1 = Y) = g(\gamma_e)$ where the $[0, 1]$-supported function $g$ (treat $\alpha$ as a constant and assume $\alpha > 0.5$) satisfies the following:
\begin{compactenum}
  \item $g(\gamma) = g(1 - \gamma)$ so $g$ is symmetric around $0.5$. 
  \item $g$ strictly increases on $[0, 0.5]$ and strictly decreases on $[0.5, 1]$
  \item $g(0) = g(1) = 0.5$, and $g(0.5) = 1 - \alpha$, so $g$ decreases as $\gamma_e$ deviates from $0.5$
\end{compactenum}
Thus $0.5 < Q_1(X_1 = Y) = Q_2(X_1 = Y) < Q_3(X_1 = Y) = Q_4(X_1 = Y) < 1 - \alpha$. Therefore, importance sampling not only weakens the relalationship between $X_1$ and $Y$, but also makese it unstable! As a result, enforcing IRM on the resampled distribution finds no non-trivial invariant predictors.

\section{Proofs}

\dataaug*
\begin{proof}

  First, for the convenience of notation let's assume $X = X(z_0) \ a.e.$ for some $z_0 \in \mathcal{Z}$. Then by the label-preserving $\mathcal{T}$, we have: for each $t \in \mathcal{T}$ we have $t(X) (= t(X(z_0))) = X(z)$ for some $z \in \mathcal{Z}$. 

  Consider consistency training. Let $\Phi_{\text{c}}(\mathcal{T})$ denote the set of representation functions satisfying consistency requirement under transformation set $\mathcal{T}$, i.e. $\Phi_{\text{c}}(\mathcal{T}) := \{\phi: \phi(X) = \phi(t(X)) \ a.e. \ \forall t \in \mathcal{T} \}$. If $\phi \in \Phi_{\text{c}}(\mathcal{T})$, then for any $z, z' \in \mathcal{Z}$, can find $t \in \mathcal{T}$ such that $X(z') = t(X(z))$ since $\mathcal{T}$ enumerates all potential outcomes of $Z$; therefore $\phi(X(z')) = \phi(t(X(z))) = \phi(X(z)) \ a.e.$ by consistency requirement. Thus $\phi \in \Phi_{\text{cf-inv}}(\E)$. On the other hand if $\phi \in \Phi_{\text{cf-inv}}(\E)$, then for any $t \in \mathcal{T}$, we have $\phi(t(X)) = \phi(X(z)) = \phi(X)$ for some $z \in \mathcal{Z}$. Thus $\phi \in \Phi_{\text{c}}(\mathcal{T})$. Therefore $\Phi_{\text{c}}(\mathcal{T}) = \Phi_{\text{cf-inv}}(\E)$. Therefore, training the model to minimize risk on original data, with hard consistency regularization is equivalent to CF-invariant representation learning, which recovers the optimal CF-invariant predictor on training distribution. 
  
  Consider ERM training on augmented data with purely-spurious $Z$. Let $P$ denote the original distribution, and $\Tilde{P}$ denote the distribution after the augmentation. Let $T$ be the random variable for transformation operation. First, the generating process of the augmented data is: first sample $T \sim \Tilde{P}_T(.)$; then sample $(X, Y)|T = t$ from the distribution of $(t(X), Y)$. Then we have:
  \begin{align*}
    \Tilde{P}(X, Y) & = \int P(t(X), Y) d \Tilde{P}_T(t) \\
    & = \int P(X(z), Y) d \Tilde{P}_Z(z)\\
    & = \int P(X(z), Y(z)) d \Tilde{P}_Z(z)\\
    & = \int P(X, Y | do(z)) d \Tilde{P}_Z(z)
  \end{align*}
  by the label-preserving of $\mathcal{T}$, and the fact that $Y$ is not a descendant of $Z$.  

  Next, observe that $P(y | x, do(z))= P(y | x_z^{\perp})$. This is because: in original probability we have $Y \independent X | X_z^{\perp}, Z$; the $do(z)$-operation removes the incoming edges of $Z$ and set $Z = z$; as a result $P(y | x, do(z)) = P(y | x_z^{\perp}, do(z)) = P(y | x_z^{\perp})$. 

  Put together:
  \begin{align*}
        \Tilde{P}(X, Y) & = \int P(X, Y | do(z)) d \Tilde{P}(z)\\
        & = \int P(Y | X, do(z)) P(X | do(z)) d \Tilde{P}(z)\\
        & = \int P(Y | X_z^{\perp}) P(X | do(z)) d \Tilde{P}(z)\\
        & = P(Y | X_z^{\perp}) \int P(X | do(z)) d \Tilde{P}(z) = P(Y | X_z^{\perp})\Tilde{P}(X)
  \end{align*}

  Therefore the objective is: 
  $$E_{\Tilde{P}}[L(Y, f(X))] = E_{\Tilde{P}(X)}[E_{P(Y | X_z^{\perp})}(L(Y, f(X)))]$$ 
  Then for any input $x$, the the optimal predictor output $f^{*}(x) = \text{argmin}_{a(x)} \int L(y, a(x)) dP(y | x_z^{\perp})$. This is the same as directly restricting predictor to be CF-invariant. 
\end{proof}

\classifications*

\begin{proof}
  \begin{figure}%
    \captionsetup{width=\columnwidth}
    \begin{subfigure}[b]{.5\textwidth}
      \centering
      \scalebox{0.7}{
        \begin{tikzpicture}[
var/.style={draw,circle,inner sep=0pt,minimum size=0.8cm},
latentconf/.style={draw,circle,dashed,inner sep=0pt,minimum size=0.8cm, fill=green!5}
]
    \node (X) [var] {$X_z$};
    \node (Y) [var, right=0.7cm of X] {$Y$};
    \node (Z) [latentconf, left=0.7cm of X] {$Z$};
    \node (Xzperp) [var, above=0.7cm of X] {$X_z^{\perp}$};
    \node (U) [latentconf, below=0.7cm of X] {$U$};
    \node (E) [var, below=0.7cm of Y] {$E$};
    \path[->, line width=0.3mm] (Z) edge (X); 
    \path[->, line width=0.3mm] (Xzperp) edge (X);
    \path[->, line width=0.3mm] (E) edge (U); 
    \path[->, orange, line width=0.3mm] (Y) edge (X); 

    \tikzstyle{bigbox}=[draw, dotted, very thick, inner sep=8pt]
    \node[bigbox, fit=(X)(Xzperp)] (Xfull) {};
    \node[below right] at (Xfull.north west) {$X$};
\end{tikzpicture} }
      \caption{base graph}
      \label{fig2:dags-types-base}
  \end{subfigure}
      \begin{subfigure}[b]{.5\textwidth}
        \centering
        \scalebox{0.7}{
          \begin{tikzpicture}[
var/.style={draw,circle,inner sep=0pt,minimum size=0.8cm},
latentconf/.style={draw,circle,dashed,inner sep=0pt,minimum size=0.8cm, fill=green!5}
]
    \node (X) [var] {$X_z$};
    \node (Y) [var, right=0.7cm of X] {$Y$};
    \node (Z) [latentconf, left=0.7cm of X] {$Z$};
    \node (Xzperp) [var, above=0.7cm of X] {$X_z^{\perp}$};
    \node (U) [latentconf, below=0.7cm of X] {$U$};
    \node (E) [var, below=0.7cm of Y] {$E$};
    \path[->, line width=0.3mm] (Z) edge (X); 
    \path[->, line width=0.3mm] (Xzperp) edge (X);
    \path[->, line width=0.3mm] (E) edge (U); 
    \path[->, orange, line width=0.3mm] (U) edge (X); 
    \path[->, orange, line width=0.3mm] (Y) edge (X); 
    \path[->, orange, bend left=45, line width=0.3mm] (Y) edge (Z); 
    \path[->, blue, line width=0.3mm, draw opacity=0.5] (Y) edge (Xzperp);
    \path[->, blue, line width=0.3mm, draw opacity=0.5] (U) edge (Y); 
    \path[->, blue, line width=0.3mm, draw opacity=0.5] (U) edge (Z); 
    \path[->, orange, line width=0.3mm] (Xzperp) edge (Z); 

    \tikzstyle{bigbox}=[draw, dotted, very thick, inner sep=8pt]
    \node[bigbox, fit=(X)(Xzperp)] (Xfull) {};
    \node[below right] at (Xfull.north west) {$X$};
\end{tikzpicture} }
        \caption{anti-causal}
        \label{fig2:dags-types-a}
    \end{subfigure}
    \begin{subfigure}[b]{.5\textwidth}
        \centering
        \scalebox{0.7}{
        \begin{tikzpicture}[
var/.style={draw,circle,inner sep=0pt,minimum size=0.8cm},
latentconf/.style={draw,circle,dashed,inner sep=0pt,minimum size=0.8cm, fill=green!5}
]
    \node (X) [var] {$X_z$};
    \node (Y) [var, right=0.7cm of X] {$Y$};
    \node (Z) [latentconf, left=0.7cm of X] {$Z$};
    \node (Xzperp) [var, above=0.7cm of X] {$X_z^{\perp}$};
    \node (U) [latentconf, below=0.7cm of X] {$U$};
    \node (E) [var, below=0.7cm of Y] {$E$};
    \path[->, line width=0.3mm] (Z) edge (X); 
    \path[->, line width=0.3mm] (Xzperp) edge (X);
    \path[->, line width=0.3mm] (E) edge (U); 
    \path[->, orange, line width=0.3mm] (U) edge (X); 
    \path[->, orange, line width=0.3mm] (Y) edge (X); 
    \path[->, orange, bend left=45, line width=0.3mm] (Y) edge (Z); 
    \path[->, blue, line width=0.3mm, draw opacity=0.5] (Xzperp) edge (Y);
    \path[->, blue, line width=0.3mm, draw opacity=0.5] (U) edge (Y); 
    \path[->, blue, line width=0.3mm, draw opacity=0.5] (U) edge (Z); 
    \path[->, orange, line width=0.3mm] (Xzperp) edge (Z); 

    \tikzstyle{bigbox}=[draw, dotted, very thick, inner sep=8pt]
    \node[bigbox, fit=(X)(Xzperp)] (Xfull) {};
    \node[below right] at (Xfull.north west) {$X$};
\end{tikzpicture} }
        \caption{confounded-outcome}
        \label{fig2:dags-types-b}
    \end{subfigure}
    \begin{subfigure}[b]{.5\textwidth}
        \centering
        \scalebox{0.7}{    
        \begin{tikzpicture}[
var/.style={draw,circle,inner sep=0pt,minimum size=0.8cm},
latentconf/.style={draw,circle,dashed,inner sep=0pt,minimum size=0.8cm, fill=green!5}
]
    \node (X) [var] {$X_z$};
    \node (Y) [var, right=0.7cm of X] {$Y$};
    \node (Z) [latentconf, left=0.7cm of X] {$Z$};
    \node (Xzperp) [var, above=0.7cm of X] {$X_z^{\perp}$};
    \node (U) [latentconf, below=0.7cm of X] {$U$};
    \node (E) [var, below=0.7cm of Y] {$E$};
    \path[->, line width=0.3mm] (Z) edge (X); 
    \path[->, line width=0.3mm] (Xzperp) edge (X);
    \path[->, line width=0.3mm] (E) edge (U); 
    \path[->, orange, line width=0.3mm] (U) edge (X); 
    \path[->, orange, line width=0.3mm] (Y) edge (X); 
    \path[->, orange, bend left=45, line width=0.3mm] (Y) edge (Z); 
    \path[->, blue, line width=0.3mm, draw opacity=0.5] (Xzperp) edge (Y);
    \path[->, blue, bend left=90, line width=0.3mm, draw opacity=0.5] (U) edge (Xzperp); 
    \path[->, blue, dashed, line width=0.3mm, draw opacity=0.5] (U) edge (Z); 
    \path[->, blue, dashed, line width=0.3mm, draw opacity=0.5] (Xzperp) edge (Z); 

    \tikzstyle{bigbox}=[draw, dotted, very thick, inner sep=8pt]
    \node[bigbox, fit=(X)(Xzperp)] (Xfull) {};
    \node[below right] at (Xfull.north west) {$X$};
\end{tikzpicture} }
        \caption{confounded-descendant}
        \label{fig2:dags-types-c}
    \end{subfigure}
    \caption{We put below \Cref{fig:dags-types} again and the base graph for convenience of inspection.The black arrows are included in all graphs. The blue arrows are specific to different causal structures. The orange arrows are optional. At least one of the two dashed blue arrows in \Cref{fig2:dags-types-c} must exist.}  
    \label{fig2:dags-types}
         \vspace{-15pt}
  \end{figure}
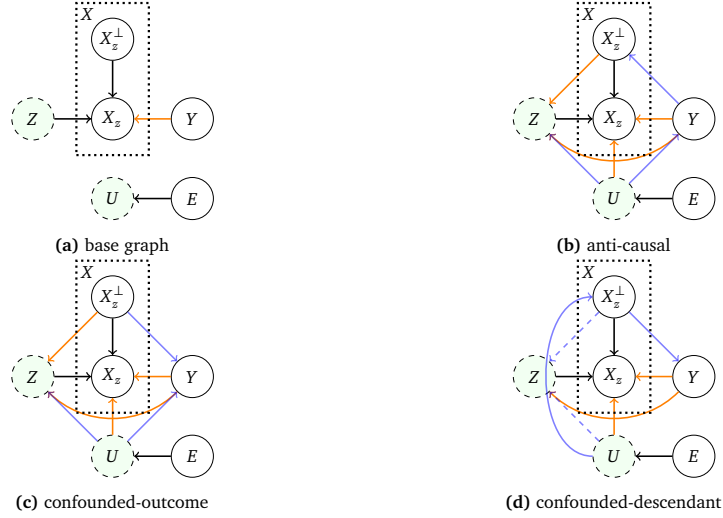

  There are a finite number of possible causal DAGs relating the variables $U,Z,X_z,Y,X_z^{\perp}$. Moreover, for a DAG to be compatible with CISA it must satisfy some conditions that narrows down the set. 
  In particular, $Z$ causes $X_z$ but not $X_z^{\perp}$ or $Y$ (by definition of $Z$); $U$ should confound $Z$ and $Y$, but cannot confound $X_z^{\perp}$ and $Y$ (by definition of $U$); $E$ only affects $U$ (as only $P_e(U)$ changes across environments and $P(X,Y, Z | U)$ is invariant). Below we use these conditions to enumerate \ref{fig2:categorization} all CISA-compatible DAGs. 

  Now we have $Z \rightarrow X_z$, $X_z^{\perp} \rightarrow X$ and $E \rightarrow U$ (and it's the only edge from $E$). Note that we define $Z$ to not have any causal effect on $Y$. Accordingly, the path $Z \rightarrow X_z \rightarrow Y$ is ruled out. Thus $X_z \rightarrow Y$ is not allowed but $Y \rightarrow X_z$ is optional. These edges form the base graph \ref{fig2:dags-types-base} to build upon.

  Next, we divide into two cases: $X_z^{\perp} \rightarrow Y$ and $X_z^{\perp} \leftarrow Y$ (when there is no edge between them, we can treat it as either case and the resulting graphs are the same). 

  When $X_z^{\perp} \leftarrow Y$: we require that $U$ confounds $Z, Y$, so we need $U \rightarrow Y$ (otherwise $U$ can't cause $Y$) and $U \rightarrow Z$ (otherwise $U$ can't confound the relationship between $U, Z$). We do not allow $U \rightarrow X_z^{\perp}$ as otherwise the relationship between $X_z^{\perp}$ and $Y$ is confounded. There are a few optional edges $U \rightarrow X_z, Y \rightarrow Z, X_z^{\perp} \rightarrow Z$, as they do not violate CISA assumptions. Other edges cannot be allowed as they will violate CISA assumptions. These constitute the anti-causal subtype as illustrated in \ref{fig2:dags-types-a}. 

  When $X_z^{\perp} \rightarrow Y$, we can again divide into two exclusive cases: $U \rightarrow Y$ and $U \rightarrow X_z^{\perp}$. Why? We need at least one of these two edges, as otherwise $U$ does not cause $Y$; the two edges cannot exist simultaneously as otherwise the relationship between $X_z^{\perp}$ and $Y$ is confounded. 

  So, when $X_z^{\perp} \rightarrow Y$ and $U \rightarrow Y$: we need $U \rightarrow Z$ as otherwise $U$ does not confound $Z, Y$. There are a few optional edges $U \rightarrow X_z, Y \rightarrow Z, X_z^{\perp} \rightarrow Z$, as they do not violate CISA assumptions. Other edges cannot be allowed as they will violate CISA assumptions. These constitute the confounded-outcome subtype as illustrated in \ref{fig2:dags-types-b}. 

  Next, when $X_z^{\perp} \rightarrow Y$ and $U \rightarrow X_z^{\perp}$: to let $U$ cause $Z$ and confound $Z, Y$, we need at least one of the two edges (or both) $X_z^{\perp} \rightarrow Z$, $U \rightarrow Z$.There are a few optional edges $U \rightarrow X_z, Y \rightarrow Z$, as they do not violate CISA assumptions. Other edges cannot be allowed as they will violate CISA assumptions. These constitute the confounded-descendant subtype as illustrated in \ref{fig2:dags-types-c}.

  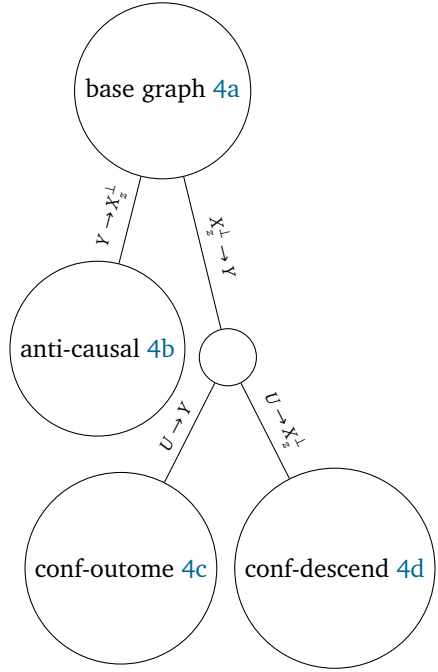
\begin{figure}
    \centering
    \caption{Enumerating CISA-compatible causal graphs}
    \medskip
    \begin{forest}
      for tree = {circle, 
          draw,
          minimum width = 2.25em,
          l sep+=2em
      }
        [base graph \ref{fig2:dags-types-base},
          [anti-causal \ref{fig2:dags-types-a}, edge label={node[midway,above, sloped, font=\scriptsize]{$Y \rightarrow X_z^{\perp}$}} ]
          [, edge label={node[midway,above,sloped, font=\scriptsize]{$X_z^{\perp} \rightarrow Y$}} 
            [conf-outome \ref{fig2:dags-types-b}, edge label={node[midway,above,sloped, font=\scriptsize]{$U \rightarrow Y$}}]
            [conf-descend \ref{fig2:dags-types-c}, edge label={node[midway,above,sloped, font=\scriptsize]{$U \rightarrow X_z^{\perp}$}}] ]
        ]
    \end{forest}
    \label{fig2:categorization}
    \end{figure}

\end{proof}

\signature*

\begin{proof}
  Reading d-separation from the corresponding DAGs, we have $X_z^{\perp} \perp E | Y$ for anti-causal problems; $X_z^{\perp} \perp E$ for confounded-outcome problems; $Y \perp E | X_z^{\perp}$ for confounded-descendant problems. Since $\phi$ is CF-invariant, that means $\phi(X)$ is $X_z^{\perp}$-measurable. Thus the claim follows. 
\end{proof}

\irm*

\begin{proof}
Confounded-descendant case: let $\phi \in \Phi_{\text{DI}}(\E)$, i.e. $Y \independent E | \phi(X)$. To show the risk minimizer is the same, it suffices to show $P_e(Y | \phi(X))$ to be the same for all $e \in \E$. This is immediate from the distributional invariance. 

Anti-causal case: if the representation $\phi \in \Phi_{\text{DI}}(\E)$, i.e. $\phi(X) \independent E | Y$, 
\begin{align*}
    & E_{P_e}[\frac{P_0(Y)}{P_e(Y)} L(Y, (\bar{w} \circ \phi)(X))] \\
    &= E_{Y \sim P_e} [\frac{P_0(Y)}{P_e(Y)} [E_{\phi(X) \sim P_e(.|Y)}(L(Y, (\bar{w} \circ \phi)(X)) | Y)]]\\
    & = E_{Y \sim P_0} [E_{\phi(X) \sim P(.|Y)}(L(Y, (\bar{w} \circ \phi)(X)) | Y)]
\end{align*}
The second equality is because $\phi(X) \independent E | Y$.

Thus the objective function is the same across domains, so the optimal $w$ is the same. Therefore $\phi \in \Phi_{\text{g-IRM}}(\E)$
\end{proof}

\end{document}